\theoremstyle{plain}
\theoremstyle{definition}
\theoremstyle{remark}
\newcommand{\indep}{\perp \!\!\! \perp}
\newcommand{\red}[1]{{\color{red}{#1}}}
\newcommand{\blue}[1]{{\color{blue}{#1}}}
\title{Approximate Causal Effect Identification under Weak Confounding}
\author{Ziwei Jiang}
\author{Lai Wei}
\author{Murat Kocaoglu}
\affil{Elmore Family School of Electrical and Computer Engineering \protect\\ Purdue University \protect\\ \tt\{jiang622, wei429, mkocaoglu\}@purdue.edu}
\newcommand{\yrcite}[1]{\citeyearpar{#1}}
\renewcommand{\cite}[1]{\citep{#1}}
\begin{document}

\maketitle
\vspace{0.35cm}

\begin{abstract}
Causal effect estimation has been studied by many researchers when only observational data is available. Sound and complete algorithms have been developed for pointwise estimation of identifiable causal queries. For non-identifiable causal queries, researchers developed polynomial programs to estimate tight bounds on causal effect. However, these are computationally difficult to optimize for variables with large support sizes. In this paper, we analyze the effect of ``weak confounding'' on causal estimands. More specifically, under the assumption that the unobserved confounders that render a query non-identifiable have small entropy, we propose an efficient linear program to derive the upper and lower bounds of the causal effect. We show that our bounds are consistent in the sense that as the entropy of unobserved confounders goes to zero, the gap between the upper and lower bound vanishes. Finally, we conduct synthetic and real data simulations to compare our bounds with the bounds obtained by the existing work that cannot incorporate such entropy constraints and show that our bounds are  tighter for the setting with weak confounders.
\end{abstract}

\section{Introduction}
\label{sec:intro}
Estimating the causal effect has long been a question of great interest in a wide range of fields, such as marketing \cite{jung2022measuring}, healthcare \cite{lv2021causal, meilia2020review}, social science \cite{freedman2010statistical}, and machine learning \cite{pearl2019seven}. The causal relation differs from the statistical association due to the existence of unobserved confounders, variables that affect both the treatment and outcome, which create a spurious association, causing the statistical association to deviate from the true causal effect. An example study of the causal relationship between weekly exercise and cholesterol in various age groups is discussed by Glymour et al.\yrcite{glymour2016causal}. In this study, the cholesterol level is negatively correlated with the amount of weekly exercise within each age group. But if the age data is not observed, the cholesterol level appears positively correlated with the amount of exercise. This is known as Simpson's paradox, where the confounding variable of age causes the sign reversal. If the variable age is not observed in the data, the true causal effect of exercise on cholesterol is not identifiable. Numerous studies have addressed this problem in different settings \cite{rosenbaum1983central, pratt1988interpretation,  pearl2022comment}. 

It is well known that the causal effect can be estimated from observational data if we could control for the confounders, which means the confounders are included in the observational data \cite{lindley1981role, rubin1974estimating, pearl1995causal}. Tian and Pearl \yrcite{tian2002general} provide conditions for the identifiability of causal queries. 

One approach for addressing non-identifiable causal queries involves making additional untestable assumptions either on the variables or on the parametric form of the model. For example, with linear model assumption, instrumental variables can be used to estimate the Average Treatment Effect (ATE) even if unobserved confounders exist \cite{bowden1990instrumental}. However, these approaches are limited to specific settings due to the restriction of variables or the expressive power of the parametric model assumptions. Many recent studies have focused on alleviating this constraint by using machine learning models as the function for the instrumental variable \cite{singh2019kernel, xu2020learning}. 

The other type of approach makes no additional assumptions but attempts to obtain bounds for the causal effect instead of point identification. This is also known as partial identification. With instrumental variables, Kilbertus et al. \yrcite{kilbertus2020class} studied bounds on ATE without the assumption that unobserved confounders affect variables additively. Padh et al. \yrcite{padh2022stochastic} extend this idea to continuous treatments. Zhang and Bareinboim \yrcite{zhang2021bounding} developed a method to bound causal effects on continuous outcomes. Recent works using generative neural networks for partial identification in high dimensional continuous settings \cite{hu2021generative, balazadeh2022partial}.

The simplest non-identifiable setting is when two observed variables, i.e., treatment and the outcome, are confounded by some latent variables. In this paper, we focus on this setting. Tian and Pearl \yrcite{tian2000probabilities} developed tight bounds in the non-parametric setting using the observational distribution as a constraint. Li and Pearl \yrcite{li2022bounds} derived bounds with nonlinear programming with partially observed confounders, i.e., only the prior distribution of the confounder is known. 

A key challenge in causal inference is determining the strength of the confounder, which refers to the degree to which the confounder is associated with the treatment and the outcome. The stronger the association, the more likely it is that the confounder is biasing the estimate of the effect of the exposure on the outcome. Sensitivity analysis is commonly used, especially for parametric models~\cite{cinelli2019sensitivity}. Many existing studies used information theoretic quantities such as directed information \cite{etesami2014directed, quinn2015directed} and relative entropy \cite{janzing2013quantifying} as measurements of the edge strength. Researchers have used entropy to discover the causal structure in the graphs \cite{kocaoglu2017entropic, kocaoglu2020applications, compton2020entropic, compton2022entropic,  compton2023minimum}. Janzing and Sch{\"o}lkopf  \yrcite{janzing2010causal} developed a theory for causal inference based on the algorithmic independence of the Markov kernels. Vreeken and Budhathoki \yrcite{vreeken2015causal, budhathoki2018origo} extend this idea by using minimum description length for causal discovery. Another common usage of information theory in causal inference is quantifying the causal influence of variables. Ay and Polani \yrcite{ay2008information} defined information flow to measure the strength of causal effect based on the causal independence of the variables. Similar to relative entropy or mutual information, the information flow measures the independence between a set of nodes $B$ and $A$ after intervening on another set $S$. Janzing et al. \yrcite{janzing2013quantifying} studied the causal influence by quantifying the changes in the distribution of a single variable response to the intervention. Geiger et al. \yrcite{geiger2014estimating} used these measures to formulate the bounds of the confounding variables by showing that the back-door dependence should be greater or equal to the deviation between observed dependence and causal effect with some measure. 

We are interested in the problem of estimating causal effect when confounders are ``simple,'' i.e., the entropy of the confounder is small. The information passing through such confounders should not be arbitrarily large, so we should get tighter bounds on the causal effect compared to the methods that cannot utilize this side information. However, it is nontrivial to incorporate low-entropy constraints since entropy is a concave function. Enforcing small entropy as a constraint directly changes the feasible set to a non-convex set. Therefore, the problem cannot be solved directly using the existing formulations. In this paper, we address this problem by quantifying the tradeoff between the strength of the unobserved confounder measured by its entropy and the upper and lower bounds on causal effect.

The main contributions of this paper are as follows:
\begin{itemize}
    \item We formulate a novel optimization problem to efficiently estimate the bounds of causal effect using counterfactual probabilities and apply the low-entropy confounder constraint using this formulation.
    \item We examine the conditions on the entropy constraint for the optimization to yield a tighter bound. We analytically show the condition when either or both treatment and outcome are binary variables.
    \item We conducted experiments using both simulated and real-world data to test our method and demonstrate that our bound is tighter than existing approaches that are unable to incorporate entropy constraints.
\end{itemize}

\begin{figure}
    \centering
    \subfloat[DAG $\mathcal{G}$ with latent confounder]{\includegraphics[scale=0.2]{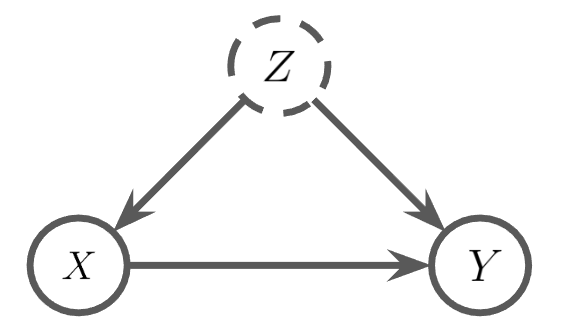}\label{subfig:dag}}
    \hfill
    \subfloat[Canonical partition of the DAG $\mathcal{G}$]{\includegraphics[scale=0.2]{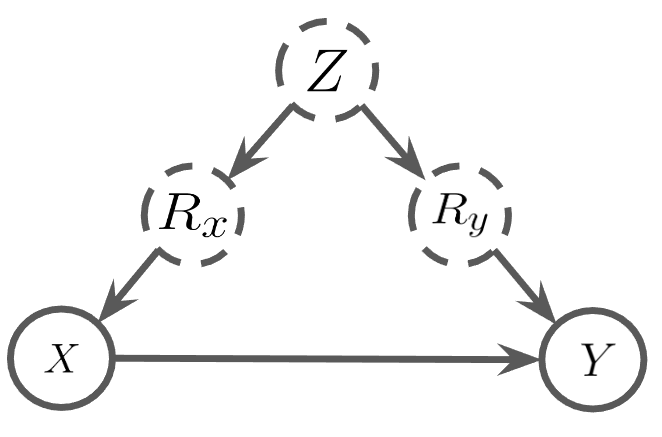}\label{subfig:cp}}
    \hfill
    \subfloat[Single world intervention graph of DAG $\mathcal{G}$]{\includegraphics[scale=0.2]{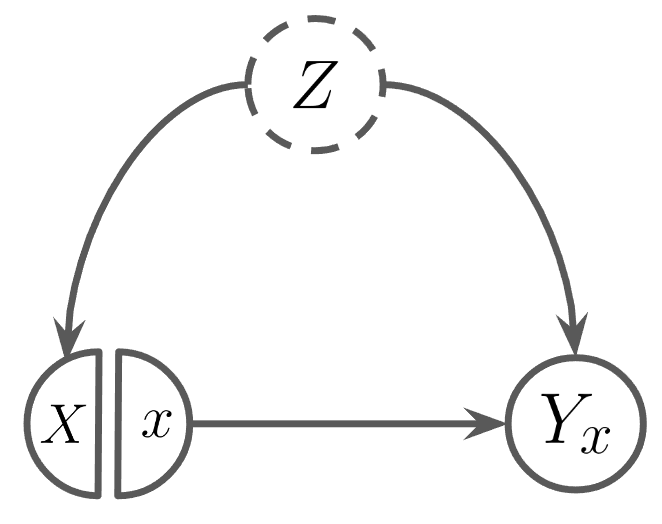}\label{subfig:cf}}
    \caption{A graph consist of treatment $X$, outcome $Y$ and an unobserved confounder $Z$ with small entropy.}
    \label{fig:latent_graph} 
\end{figure}

\section{Background and Notation}
\textbf{Notations.}
Throughout the paper, we use uppercase letters $X, Y, Z$ to denote the random variables and lowercase letters $x_i, y_i, z_i$ for their states. We use $\{x, x'\}$ to denote the states of binary variables. The Greek letters $\alpha, \beta, \theta$ are used to denote some constant value for the probability mass function or information-theoretic quantities. $\abs{X}$ represents the number of states for a random variable.
The uppercase letter with a lowercase letter as the subscript shows an intervened variable, i.e., $P(Y_x=y):= P(y|do(x))$. This notation is also used for counterfactual distributions, e.g., $P(Y_x=y| X=x')$ means the probability of $y$ had we intervened on $x$ given that $x'$ is observed. 
For a probability mass function $P(Y=y, X=x)$, we write $P(y,x)$ as an abbreviation. For counterfactual distribution $P(Y_x=y, x')$, we keep the notation of a random variable to avoid confusion.

\textbf{Entropy and mutual information.} In this paper, we use the term entropy to refer to the Shannon entropy, which quantifies the average amount of information in the variable. For a discrete random variable $X$, its Shannon entropy is defined as follows
\begin{equation*}
    H(X) = \sum_{i} -P(x_i)\log{P(x_i)}.
\end{equation*}
Mutual information is a concept closely related to entropy. It measures the average amount of information one variable carries about another. For discrete variables, $X,Y$, the mutual information is defined as
\begin{equation*}
    I(X;Y) = \sum_i \sum_j P(x_i, y_j) \log{\frac{P(x_i, y_j)}{P(x_i)P(y_j)}}.
\end{equation*}

\textbf{Causal DAG.}
A directed acyclic graph (DAG) encodes the causal relationship between variables, where the nodes represent  variables, and the edges represent their causal relationships. Causal graphs are often used to help identify and understand complex systems. A graphical condition called d-separation can  be used to read-off the independence induced by a graph. Pearl \yrcite{pearl1995causal} introduced a set of rules called do-calculus for deriving causal queries from observational data. 

\textbf{Structural Causal Model.}
Pearl \yrcite{pearl1995causal} introduced the Structural Causal Model (SCM), a mathematical framework that can be used to study causality and counterfactuals \cite{pearl2009causality, zhang2022partial}. We can describe causal relationships between variables with a set of functions in SCM. More specifically, an SCM is a tuple $\{\mathcal{U}, \mathcal{V}, \mathcal{F},\mathbb{P}\}$ where $\mathcal{U}$ is a set of exogenous variables, $\mathcal{V}$ is a set of endogenous variables, and $\mathcal{F}$ is a set of functions with same cardinality of $\mathcal{V}$, and $\mathbb{P}$ is a product probability measure. Each $v\in\mathcal{V}$  is generated by some $f\in\mathcal{F}$ as a function of other variables. The functions in an SCM impose a causal graph. 

\textbf{Canonical Partition.}
A widely used method for bounding the causal effects is canonical partition. Consider an SCM with the graph in \cref{subfig:dag}. The binary variables $X, Y$ are generated by the functions $y = f_y(x,u_y)$ and $x=f_x(u_x)$. The latent variables $U_x$ and $U_y$ are not independent since there is latent confounding between $X$ and $Y$. Balke and Pearl \yrcite{balke1997bounds} pointed out that the latent variable $U_y$ can be replaced by a finite-state response variable $R_y$ representing the distinct functions mapping $X$ to $Y$. All these functions can be represented by a response variable $R_y$ with four states.
\begin{table}[hb]
\caption{Response variable $R_y$}
\label{table:cp_intro}
\vskip 0.15in
\begin{center}
\begin{small}
\begin{sc}
\begin{tabular}{l c|c|c r}
\toprule
    & & $X=x_0$ & $X=x_1$\\
    \midrule
    &$R_y=0$ &$Y=y_0$ & $Y=y_0$ \\
    &$R_y=1$ &$Y=y_0$ & $Y=y_1$ \\
    &$R_y=2$ &$Y=y_1$ & $Y=y_0$\\
    &$R_y=3$ &$Y=y_1$ & $Y=y_1$\\
\bottomrule
\end{tabular}
\end{sc}
\end{small}
\end{center}
\end{table}

Each row corresponds to a function that maps $X$ to $Y$. For example if $R_y = 1$, we have
$$Y = f_Y(X, R_y=1) = \begin{cases}
    y_0  & \text{if } X=x_0 \\
    y_1 & \text{if } X=x_1.
\end{cases}    $$

And for $X$ we have $R_x \in \{1,2\}$
\begin{align*}
    &R_x=0: X=x_0, \\
    &R_x=1: X=x_1.
\end{align*}

Due to the latent confounder, $R_x$ and $R_y$ are dependent as shown in \cref{subfig:cp}. The joint distribution $P(R_x, R_y)$ has total of 8 states, denoted by $q_{ij} = P(R_x=i, R_y=j),$ and $ p_{ij} = P(x_i, y_j)$. The observational probability can be expressed as $p_{00} = q_{00}+q_{01}$, $p_{01} = q_{02}+q_{03}$, $p_{10} = q_{10}+q_{12}$, and $p_{11} = q_{11}+q_{13}$.

The causal effect $P(y_0|do(x_0))$ is the probability of the function which maps $x_0$ to $y_0$. Therefore we have $P(y_0|do(x_0)) = q_{00}+q_{01}+q_{10}+q_{11}$. Combining the above equations, we obtain the bounds of causal effect in a closed-form expression: $p_{00}\leq p(y_0|do(x_0))\leq1-p_{01}$. This method has been used in causal inference problems. Tian and Pearl \yrcite{tian2000probabilities} apply this to estimate the bounds for the probability of causation given the interventional data. Zhang and Bareinboim \yrcite{zhang2017transfer} use this method to derive bounds for the multi-arm bandit problem. 

This bound holds true for any pair of variables, so it can not incorporate any side information, such as the graph structure or the prior distribution of the unobserved confounder, and might be loose in some cases. For example, consider the distribution $P(X, Y)$ with binary $X$ and $Y$ and a low entropy confounder. Suppose both $P(x, y)$ and $P(x, y')$ are small, and we know that the entropy of the confounder is small, i.e., upper bounded by some small value $\theta$. Intuitively, the causal effect $p(y|do(x))$ should be close to $P(y|x)$. This is validated with experiment in \cref{subsec:random}. Without incorporating this information about confounder, the bounds are not very informative since the bounds are close to $0$ and $1$:
\begin{equation*}
    0 \approx P(x, y) \leq P(y|do(x))   \leq 1-P(x, y') \approx 1
\end{equation*}

In \cref{sec:bound_partition}, we form an optimization problem that can incorporate such low entropy constraints.

\textbf{Counterfactual and Single-World Intervention Graph (SWIG).} 
Counterfactual queries are questions of the form \emph{``What would happen if an intervention or action had been taken differently, given what already has happened.''} Pearl \yrcite{pearl2009causality} introduced counterfactual reasoning with the SCM. A counterfactual query $P(Y_x=y|x')$ reads, ``The probability of $y$ had we intervened on $x$ given $x'$ is observed." In general, given an SCM, the counterfactual queries can be estimated with three steps: ``abduction,'' ``action,'' and ``prediction.'' The first step is to use the observed $x'$ as evidence to update the exogenous variables $U$. The second step is to apply the intervention by replacing the value in the SCM with $x$. And lastly, make predictions with the updated SCM. 

Richardson and Robins \yrcite{richardson2013single} introduced a graphical representation to link the counterfactual distribution and DAG, called Single World intervention graphs (SWIGs). We can represent the interventional variable $Y_x$ as a node in the DAG and split the treatment variable into nodes $X$ and $X=x$. As shown in \cref{subfig:cf}, we have $Y_x$ independent from $X$ given $Z$.

\section{Bounding Causal Effect with Entropy Constraint}\label{sec:optimization}
\subsection{Bounds with Canonical Partition}\label{sec:bound_partition}

Consider the DAG in \cref{subfig:dag}, the latent factors can be represented by a joint distribution $P(R_x,R_y)$ with $\abs{R_x}=2, \abs{R_y}=4$. The canonical partition representation parameterizes the exogenous variables with $R_x, R_y$. Therefore the entropy of $H(R_y, R_x)$ does not necessarily reflect the confoundedness of $X, Y$. For example, we could have a small entropy confounder and $Y$ with large exogenous entropy. In that case,  $H(Z) \ll H(R_y)$. Since the unobserved confounders do not appear in the canonical partition representation, applying the entropy constraint for estimating bounds is not straightforward.

To overcome this difficulty, we notice that the causal effect $P(y|do(x))$ is equal to $P(y, x)+ \alpha P(y, x') +\beta P(y', x')$ for some unknown $\alpha, \beta$. Intuitively, these two parameters can be thought of as the proportion of $p(y,x')$ and $p(y',x)$ that is generated by the function that maps $x$ to $y$, i.e., from $R_y=\{0,1\}$. The causal effect attains the Tian-Pearl lower bound if $\alpha=\beta=0$. In that case, $P(R_x=1, R_y=0) = P(R_x=1, R_y=1) = 0$. This imposes a constraint on the minimum value of mutual information to attain such distribution. Since the $R_x, R_y$ are conditionally independent of $Z$, the mutual information $I(R_x; R_y)$ is bounded by the entropy of $Z$. 
Under the assumption that the confounder $Z$ is simple, i.e., $H(Z)\leq\theta$. We can apply the entropy constraint to estimate the bounds of the causal effect.

\begin{table}[bht]
\caption{Table for the counterfactual distribution}
\label{table:cf_conditional}
\vskip 0.15in
\begin{center}
\begin{footnotesize}
\begin{sc}
\raisebox{-1cm}{\rotatebox{90}{$P(Y_{x_q}|X)$}}\hspace{0pt}
\begin{tabular}{l |c c c c c c c r}
\toprule
    &  \multicolumn{5}{c}{$P(x)$}\\
    & $x_0$  & $\dots$ & $x_q$ & $\dots$ & $x_n$\\
    \midrule
    $y_0$ &$\blue{b_{00}}$& $\blue{\dots}$ & $P(y_0|x_q)$ & $\blue{\dots}$ &$ \blue{b_{0n}}$ \\
    $\blue{\vdots}$ & $\blue{\vdots}$  & $\blue{\ddots}$  & $\blue{\vdots}$ & $\blue{\ddots}$ & $\blue{\vdots}$ \\
    $y_p$ & $\red{b_{p0}}$ & \red{$\dots$} & $P(y_p|x_q)$ &  \red{$\dots$}& $\red{b_{pn}}$\\
    $\blue{\vdots}$ & $\blue{\vdots}$ & $\blue{\ddots}$ & $\blue{\vdots}$ & $\blue{\ddots}$ &$\blue{\vdots}$  \\
    $y_m$ & $\blue{b_{m0}}$ & $\blue{\dots}$ & $P(y_m|x_q)$ & $\blue{\dots}$ & $\blue{b_{mn}}$\\
\bottomrule
\end{tabular}
\end{sc}
\end{footnotesize}
\end{center}
\vskip -0.1in
\end{table}

\begin{restatable}{theorem}{cpalg}
\label{thm:cpalg}
Let $(X, Y)$ be the pair of variables in the causal graph in \cref{subfig:dag} with the joint distribution $P(X,Y)$. Suppose $\abs{X}=n, \abs{Y}=m$. Assuming $X$ and $Y$ are confounded by a set of small entropy unobserved variables $Z$, i.e., $H(Z)\leq \theta$ for some $\theta\in \mathbb{R}$. The causal effect of $x_q$ on $y_p$ is bounded by $\text{LB} \leq P(y_p|do(x_q)) \leq \text{UB}$, where
\begin{align*}
    \text{LB}/\text{UB} &= \min/\max{\left(\sum_{i=0}^{m^n-1}\sum_{j=0}^{n-1} a_{ij}P(x_j)\right)}\\
    \text{s.t.}&\:
    \sum_{i,j} a_{ij}P(x_j) = 1;
    \sum_{i=0}^{m^{n-1}}  a_{iq}P(x_q) = P(y_p, x_q);
    0\leq a_{ij} \leq 1\; \forall i,j;\\
    &\sum_{i,j}  a_{ij}P(x_j) \log\left(\frac{a_{ij}}{\sum_k a_{ik}P(x_k)}\right)\leq \theta.
\end{align*}    
\end{restatable}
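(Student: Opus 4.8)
The plan is to exhibit the stated linear program as a \emph{relaxation}: I will show that the parameters read off from any SCM consistent with $P(X,Y)$ and satisfying $H(Z)\le\theta$ form a feasible point, and that at the point coming from the true data-generating SCM the objective equals $P(y_p\mid do(x_q))$. Given this, the sandwich $\text{LB}\le P(y_p\mid do(x_q))\le\text{UB}$ is immediate, and feasibility of the program is witnessed by the data-generating SCM itself.

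First I fix a parametrization via the canonical partition of \cref{subfig:cp}. Following Balke and Pearl, replace the exogenous inputs private to $X$ and to $Y$ by response variables $R_x$ (with $n$ states, so $R_x$ determines $X$ and $P(R_x=j)=P(x_j)$) and $R_y$ (with $m^n$ states, each indexing one deterministic map $X\mapsto Y$), and set $a_{ij}:=P(R_y=i\mid R_x=j)=P(R_y=i\mid X=x_j)$. These are automatically nonnegative, and since each column $a_{\cdot j}$ is a conditional distribution of $R_y$ we have $\sum_i a_{ij}=1$, hence $\sum_{i,j}a_{ij}P(x_j)=\sum_j P(x_j)=1$. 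Write $S$ for the set of response-function indices sending $x_q$ to $y_p$ (there are $m^{n-1}$ of them). Since $X=x_q$ forces $R_x=q$, the observational law gives $P(y_p,x_q)=P(R_y\in S,\ R_x=q)=\sum_{i\in S}a_{iq}P(x_q)$, the second equality constraint; and since $do(x_q)$ produces $Y=y_p$ exactly when $R_y\in S$, the target query is $P(y_p\mid do(x_q))=\sum_{i\in S}P(R_y=i)=\sum_{i\in S}\sum_j a_{ij}P(x_j)$, which is the objective. Both are linear in $\{a_{ij}\}$, so every true SCM yields a point meeting the normalization, nonnegativity, and data-consistency constraints at which the objective equals the quantity to be bounded.

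The crux is the entropy constraint. Here I would first observe that forming the canonical partition leaves the confounder untouched: only the noises private to $X$ and to $Y$ are repackaged, so $R_x$ is a function of $Z$ and a noise $N_x$, while $R_y$ is a function of $Z$ and a noise $N_y$, with $N_x,N_y,Z$ mutually independent; consequently $R_x\indep R_y\mid Z$, precisely the structure in \cref{subfig:cp}. The data-processing inequality then gives $I(R_x;R_y)\le I(R_x;Z)\le H(Z)\le\theta$. It remains to recognize the left-hand side of the stated entropy constraint as $I(R_x;R_y)$: using $P(x_j)=P(R_x=j)$ and $\sum_k a_{ik}P(x_k)=\sum_k P(R_y=i\mid R_x=k)P(R_x=k)=P(R_y=i)$, the sum $\sum_{i,j}a_{ij}P(x_j)\log\bigl(a_{ij}/\sum_k a_{ik}P(x_k)\bigr)$ becomes $\sum_{i,j}P(R_y=i,R_x=j)\log\bigl(P(R_y=i,R_x=j)/(P(R_y=i)P(R_x=j))\bigr)=I(R_x;R_y)$. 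Hence the $\{a_{ij}\}$ coming from the true SCM also satisfy the entropy constraint, so the data-generating SCM lies in the feasible region and $\text{LB}/\text{UB}$ bracket $P(y_p\mid do(x_q))$.

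I expect the only real obstacle to be this last step, and in particular the claim that the canonical partition can be performed while leaving $Z$ — hence the value of $\theta$ — intact and preserving $R_x\indep R_y\mid Z$; this deserves care when $Z$ is a \emph{set} of latent variables rather than a single one, and hinges on writing the latent inputs of $X$ and $Y$ in the form (shared part, private noise). Everything else — linearity of the objective and constraints, nonnegativity, normalization, the final min/max inequality, and feasibility — is routine bookkeeping. One point worth flagging is that the program retains only the marginal $P(X)$ and the single conditional $P(y_p\mid x_q)$ rather than the full joint $P(X,Y)$; dropping the remaining observational constraints can only enlarge the feasible set, so the resulting bound may be loose but stays valid, which is all the theorem claims.
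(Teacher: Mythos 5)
Your proposal is correct and follows essentially the same route as the paper's own proof: parametrize via the Balke--Pearl canonical partition with $a_{ij}=P(R_y=i\mid R_x=j)$, check the normalization and data-consistency constraints, and use the Markov chain $R_x - Z - R_y$ with the data-processing inequality to get $I(R_x;R_y)\le H(Z)\le\theta$, so the true SCM is feasible and the min/max sandwich the query. You are in fact somewhat more explicit than the paper about identifying the objective (restricted to the index set $S$ of response functions sending $x_q$ to $y_p$) with $P(y_p\mid do(x_q))$ and about why the confounder survives the repackaging of the private noises; the only element of the paper's proof you omit is the closing remark that the program is convex, which concerns computability of LB/UB rather than their validity.
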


We formulate the bounds as optimization problems with entropy constraints. $a_{ij}$ is the parameter for the optimization problem. 
The canonical partition can be naturally generalized to variables in higher dimensions. However, the number of states in the optimization problem quickly becomes intractable with the number of states of the observed variables. For $|X| = n$, $|Y|=m$, the number of possible functions mapping $X$ to $Y$ is $m^n$, so $|R_y| = m^n$. The total number of parameters is $nm^n$, which grows exponentially fast as the number of states of $X$ and $Y$ increase. In the next subsection, we present an alternative formulation to estimate bounds. 

\subsection{Bounds via Counterfactual Probabilities} 
We propose a new optimization problem using counterfactual probabilities to address the computational challenge in the canonical partition method.

For the causal graph in \cref{subfig:dag}, the interventional distribution can be represented as $P(Y_x) = P(Y_x, x) + P(Y_x, x')$.
By the consistensy property \cite{robins1987graphical}, we have $P(Y_x, x) = P(Y, x) $. And by the axiom of probability, $P(y_x, x') \leq P(x')$ for any $y\in Y$. 
\begin{align}
\label{eq:relate}
   P(y, x)  &\leq P(Y_x= y) \nonumber \\
   &= P(y,x) + P(y_x, x') \nonumber\\
   &\leq  P(y, x) + P(x') \\
   &= 1 - P(y', x) \nonumber
\end{align}
The above derivation shows the bounds from the counterfactual probability are equivalent to Tian-Pearl bounds. $Y_x$ and $X$ are d-separated by the confounder $Z$, i.e. $Y_x\indep X | Z $.

\begin{figure}[t]
    \centering
    \subfloat[The entropy threshold]
    {\includegraphics[width=0.4\linewidth]{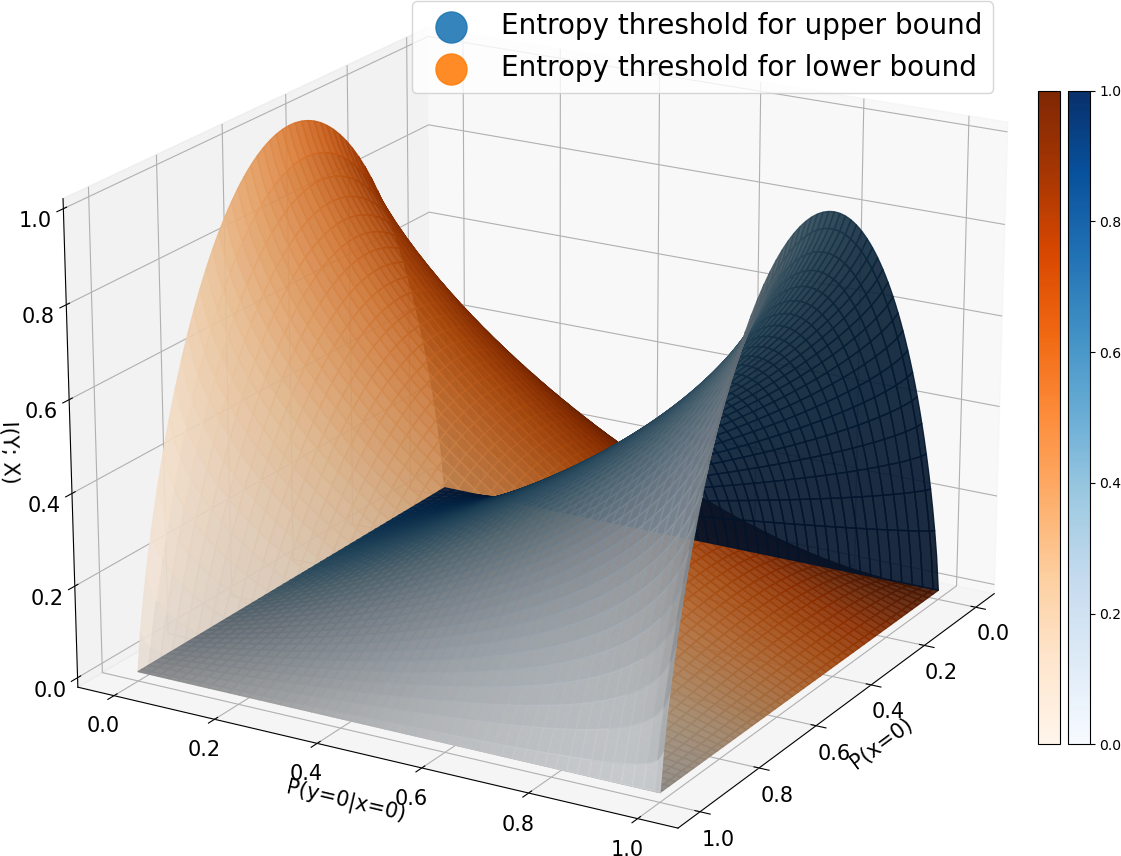}
    \label{fig:required_MI}}
    \hspace{0.1\linewidth}
    \subfloat[The average gap of bounds]{\includegraphics[width=0.4\linewidth]{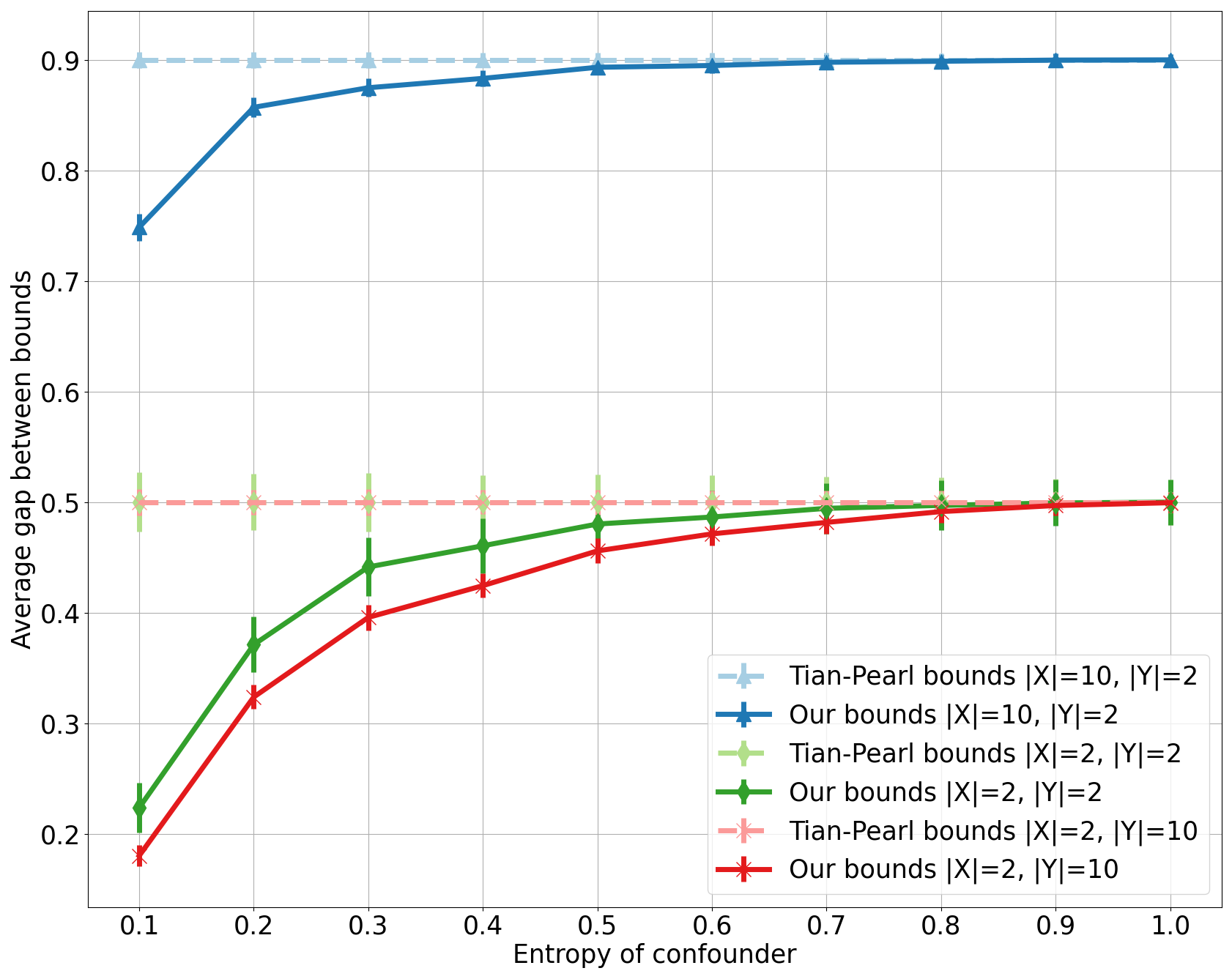}
    \label{fig:averagebounds}}
    \caption{(a) The entropy thresholds are obtained by sampling $P(x_0)$ and $P(y_0|x_0)$ from $0$ to $1$ which are the $x$ and $y$ axies in the figure. The orange surface represents the entropy threshold for obtaining a tighter upper bound; the blue surface represents the entropy threshold for obtaining a tighter lower bound. The lightness indicates the gap between the upper and lower bound; the lighter the color, the smaller the gap. Without entropy constraint, the gap depends on the value of $P(x_0)$. (b) The x-axis represents the entropy groups, and the y-axis represents the average gap in the group. The error bars represent the $95\%$ confidence interval.}
\end{figure}

\begin{figure*}[htb]
    \centering
    \includegraphics[scale=0.32]{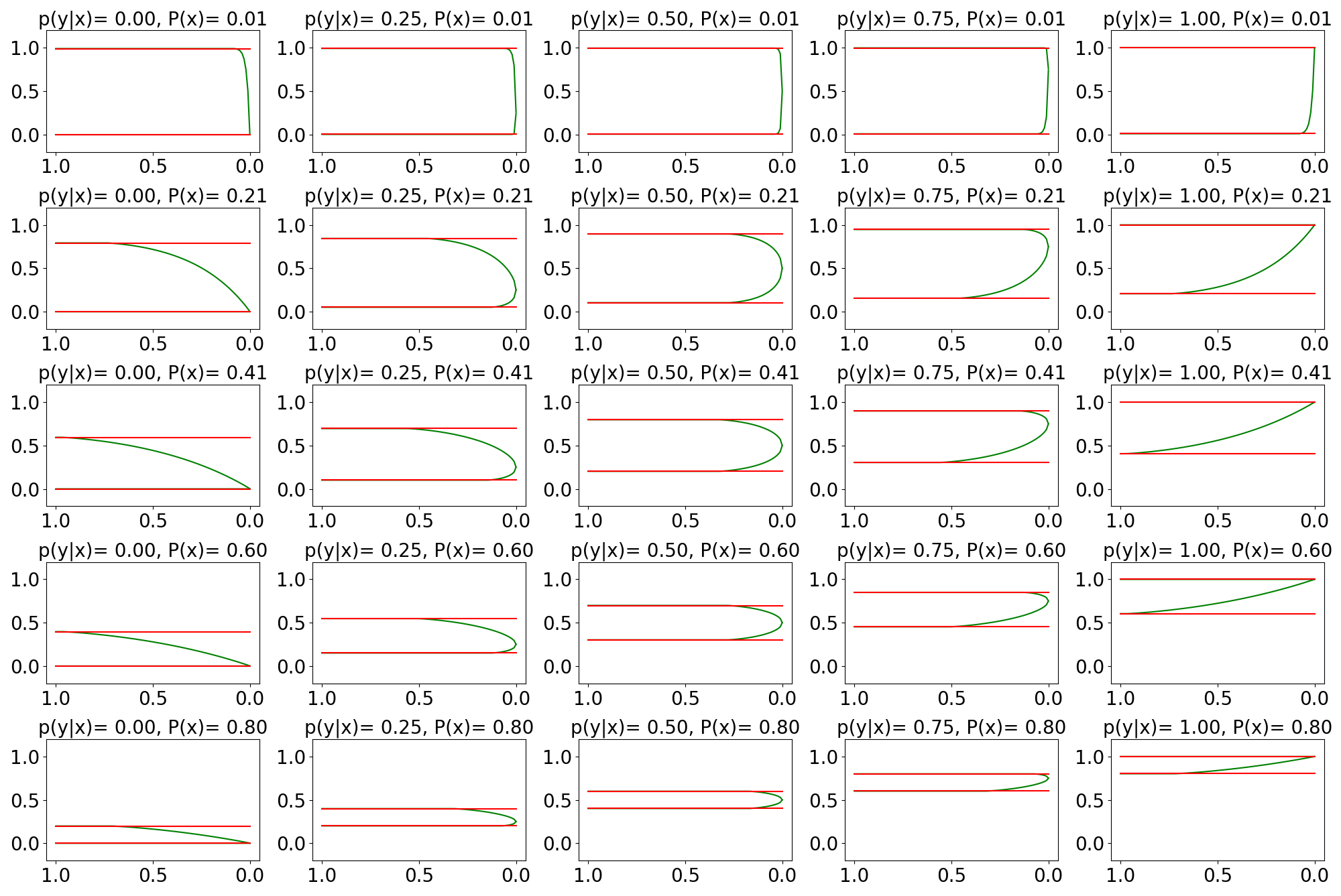}
    \caption{Bounds of the causal effect. The red lines show Tian-Pearl’s bounds, and the green lines show our bounds. The x-axis represents the entropy constraint, and the y-axis represents the causal effect $P(y|do(x))$. For each row $P(y|x)$ increases as $P(x)$ is fixed; $P(x)$ increases from top to bottom. The gap between the upper and lower bound decreases monotonically as $P(x)$ increases. The entropy threshold is high when $P(x)$ is close to $0.5$ and $P(y|x)$ is close to $1$ or $0$. }
    \label{fig:conf_x2y2}
\end{figure*}

Similar to the argument in \cref{sec:bound_partition}, a minimum value of mutual information $I(Y_x; X)$ exists for the causal effect to attain maximum/minimum. By exploiting the d-separation in the SWIG, we can impose the entropy constraint for the optimization problem. We present an optimization problem with entropy constraint based on this method and show that this formulation significantly reduces the number of parameters compared to the canonical partition approach.
\begin{restatable}{theorem}{cfalg}
    \label{thm:cfalg}
Let $(X, Y)$ be the pair of variables in the causal graph in \cref{subfig:dag} with the joint distribution $P(X,Y)$. Suppose $\abs{X}=n, \abs{Y}=m$. Assuming $X$ and $Y$ are confounded by a set of small entropy unobserved variables $Z$,  i.e., $H(Z)\leq \theta$ for some $\theta\in \mathbb{R}$. The causal effect of $x_q$ on $y_p$ is bounded by $\text{LB} \leq P(y_p|do(x_q)) \leq \text{UB}$, where
\begin{align*}
    \text{LB}/\text{UB} &= \min/\max{\left(\sum_{j} b_{pj}P(x_j)\right)}\\
    \text{s.t.}& \:\sum_{i,j} b_{ij}P(x_j) = 1;
     b_{iq}P(x_q) = P(y_i, x_q)\; \forall i ;
     0\leq b_{ij}\leq 1 \; \forall i,j;\\
     &\sum_{i,j} b_{ij}P(x_j) \log\left(\frac{b_{ij}}{\sum_k b_{ik}P(x_k)}\right) \leq \theta.
\end{align*}
\end{restatable}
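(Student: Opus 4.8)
The plan is to run essentially the same argument as for \cref{thm:cpalg}, but parameterizing by the counterfactual conditional distribution $P(Y_{x_q}|X)$ of \cref{table:cf_conditional} rather than by the response-variable joint. First I would set $b_{ij}:=P(Y_{x_q}=y_i|X=x_j)$ for $i\in\{0,\dots,m-1\}$ and $j\in\{0,\dots,n-1\}$, which automatically gives $0\le b_{ij}\le 1$ and $\sum_i b_{ij}=1$ for every $j$; summing the latter against $P(x_j)$ yields the aggregate normalization $\sum_{i,j}b_{ij}P(x_j)=1$. Consistency \cite{robins1987graphical} pins down the $q$-th column: $P(Y_{x_q},X=x_q)=P(Y,X=x_q)$, so $b_{iq}=P(y_i|x_q)$ and hence $b_{iq}P(x_q)=P(y_i,x_q)$ for all $i$. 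The objective is then just the law of total probability together with $Y_{x_q}=y_p \iff Y=y_p$ on the event $X=x_q$: $P(y_p|do(x_q))=P(Y_{x_q}=y_p)=\sum_j P(Y_{x_q}=y_p|X=x_j)P(x_j)=\sum_j b_{pj}P(x_j)$.

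The crux is the entropy constraint. From the SWIG in \cref{subfig:cf} one reads off $Y_{x_q}\indep X\mid Z$, so $I(Y_{x_q};X\mid Z)=0$. Then, since conditioning on extra variables cannot decrease mutual information and by the chain rule,
\begin{equation*}
I(Y_{x_q};X)\le I(Y_{x_q};X,Z)=I(Y_{x_q};Z)+I(Y_{x_q};X\mid Z)=I(Y_{x_q};Z)\le H(Z)\le\theta,
\end{equation*}
which goes through verbatim when $Z$ is a set of latents, reading $H(Z)$ as the joint entropy. It remains to rewrite $I(Y_{x_q};X)$ in the optimization variables: using $P(Y_{x_q}=y_i,X=x_j)=b_{ij}P(x_j)$ and $P(Y_{x_q}=y_i)=\sum_k b_{ik}P(x_k)$, the factor $P(x_j)$ cancels inside the logarithm, giving exactly $\sum_{i,j}b_{ij}P(x_j)\log\!\big(b_{ij}/\sum_k b_{ik}P(x_k)\big)=I(Y_{x_q};X)\le\theta$, the stated constraint.

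Combining these observations, the true counterfactual table $\{b_{ij}^\ast\}$ is a feasible point of the program whose objective value is exactly $P(y_p|do(x_q))$; therefore the minimum of the objective over the feasible set is at most $P(y_p|do(x_q))$ and the maximum is at least $P(y_p|do(x_q))$, establishing $\text{LB}\le P(y_p|do(x_q))\le\text{UB}$. I expect the main obstacle to be a matter of care rather than depth: making the mutual-information bookkeeping precise when $Z$ is a vector, treating states with $P(x_j)=0$ in the log terms (they contribute nothing and may be dropped), and being explicit that the program is a relaxation — only necessary conditions on the true $b_{ij}$ are imposed (in particular per-column normalization is weakened to the single aggregate equality), so the bounds are sound but not claimed tight here; tightness and the vanishing gap as $\theta\to 0$ are treated separately.
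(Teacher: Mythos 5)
Your proposal is correct and follows essentially the same route as the paper's proof: exhibit the true counterfactual table $b_{ij}=P(Y_{x_q}=y_i\mid x_j)$ as a feasible point (consistency for the $q$-th column, d-separation of $Y_{x_q}$ and $X$ by $Z$ in the SWIG to bound $I(Y_{x_q};X)$ by $H(Z)\le\theta$), and conclude that the program's optimal values bracket $P(y_p\mid do(x_q))$. Your write-up is in fact slightly more explicit than the paper's on the information-theoretic step (spelling out the chain rule rather than just invoking the data processing inequality); the only item the paper adds that you omit is the observation that the feasible set is convex, which concerns solvability of the program rather than the validity of the bounds.
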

Here $b_{ij}$ are the parameters for the optimization problem. Similar to \cref{sec:bound_partition}, we form the causal effect bounds estimation as a maximization and minimization problem. This formulation is more efficient than the canonical partition method in terms of the number of parameters. Consider again the case $|X| = n$, $|Y|=m$; the number of parameters for this optimization problem is $nm$. This number is significantly smaller than the canonical partition case with $nm^n$ parameters. We will discuss this in more detail in \cref{subsec:compare}.

\section{Condition for Obtaining Tighter Bounds} 

For \cref{thm:cpalg}, the mutual information $I(R_y; X)$ is upper bounded by $\theta$. And for \cref{thm:cfalg}, mutual information $I(Y_x; X)$ is upper bounded by $\theta$. In both formulations, the entropy constraint depends on the mutual information between $X$ and another variable. 
The bounds with entropy constraint will be identical to Tian-Pearl bounds when the upper bound on the confounders entropy is large. We define the greatest value of entropy constraint that yields tighter bounds as the ``entropy threshold''. 

{\definition Let $(X, Y)$ the pair of variables in the causal graph in \cref{subfig:dag}. Given an observational distribution $P(X, Y)$ and a causal query $P(y_p|do(x_q))$, the entropy threshold is the greatest entropy constraint such that the bounds obtained from \cref{thm:cfalg} are tighter than the Tian-Pearl bounds.}

The entropy threshold depends on the observational distribution $P(X, Y)$. The following lemmas show the entropy threshold when either $X$ and $Y$ are binary variables.

\begin{restatable}{lemma}{binaryxy}
    \label{lm:binaryxy}
Let $(X, Y)$ be the pair of binary variables in the causal graph in \cref{subfig:dag}. Consider $P(Y_x, X)$ for any $x\in X$. Assume, without loss of generality, $P(y|x) \geq P(y'|x)$. Then the following conditions are equivalent: 
\begin{enumerate}
    \item $P(Y_x=y)$ attain the Tian-Pearl lower bound,
    \item $P(Y_x=y')$ attain the Tian-Pearl upper bound,
    \item $I(Y_x;X)$ is maximized for the given $P(X,Y)$.
\end{enumerate}

\end{restatable}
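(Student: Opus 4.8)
The plan is to prove the three-way equivalence by exploiting the very simple structure of the binary case. Fix $x\in X$ and write $x'$ for the other state. The counterfactual table restricted to this intervention has two free entries: $P(Y_x=y,x')$ and $P(Y_x=y',x')$, which sum to $P(x')$, while $P(Y_x=y,x)=P(y,x)$ and $P(Y_x=y',x)=P(y',x)$ are pinned down by consistency. So the only degree of freedom is a single scalar, say $t:=P(Y_x=y,x')\in[0,P(x')]$, and then $P(Y_x=y)=P(y,x)+t$ and $P(Y_x=y')=P(y',x)+(P(x')-t)$. The Tian--Pearl lower bound on $P(Y_x=y)$ is attained exactly when $t=0$, and the Tian--Pearl upper bound on $P(Y_x=y')$, namely $1-P(y,x)$, is attained exactly when $P(x')-t=P(x')$, i.e. $t=0$. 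This immediately gives the equivalence of (1) and (2): both say $t=0$.

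Next I would connect $t=0$ to the maximization of $I(Y_x;X)$. Since $X$ is binary, $I(Y_x;X)=H(Y_x)-H(Y_x\mid X)$, where the marginal $P(Y_x=y)=P(y,x)+t$ varies with $t$ and the conditional distribution $P(Y_x\mid X=x)$ is fixed (it equals $(P(y\mid x),P(y'\mid x))$), so only $P(Y_x\mid X=x')=(t/P(x'),1-t/P(x'))$ depends on $t$. Thus $I(Y_x;X)$ is a function of the single variable $t$ on $[0,P(x')]$, and I would show it is maximized at an endpoint — specifically at $t=0$ under the hypothesis $P(y\mid x)\ge P(y'\mid x)$. The natural route is to observe that $I(Y_x;X)$, as a function of the conditional law $P(Y_x\mid X=x')$ for fixed marginal-of-$X$ and fixed $P(Y_x\mid X=x)$, is convex (mutual information is convex in the channel for a fixed input distribution), hence attains its maximum over the line segment $t\in[0,P(x')]$ at one of the two endpoints; then a direct comparison of the two endpoint values, using $P(y\mid x)\ge P(y'\mid x)$, picks out $t=0$. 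At $t=P(x')$ we would have $P(Y_x\mid X=x')=(1,0)$, which is as "deterministic'' as $t=0$'s choice $(0,1)$, so the convexity argument alone does not decide between the endpoints; the monotonicity assumption is exactly what breaks the tie, because moving $P(Y_x=y)$ toward the more extreme side increases $H(Y_x)$ less favorably — this sign bookkeeping is the one genuinely fiddly computation and the place I expect to spend the most care.

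To make the tie-breaking rigorous I would write out $I(t) = h\!\left(P(y,x)+t\right) - P(x)\,h\!\left(P(y\mid x)\right) - P(x')\,h\!\left(t/P(x')\right)$ where $h$ is the binary entropy function, and compare $I(0)$ with $I(P(x'))$. We have $I(0) - I(P(x')) = h(P(y,x)) - h(P(y,x)+P(x')) = h(P(y,x)) - h(1-P(y',x))$, and since $h$ is symmetric about $1/2$ this equals $h(P(y,x)) - h(P(y',x))$, which is $\ge 0$ precisely when $P(y,x)$ is at least as far from $1/2$... here I must be careful: the correct comparison uses $P(y\mid x)$ versus $P(y'\mid x)$, so I would instead normalize and compare the full expressions, using that $h$ is increasing on $[0,1/2]$ and the hypothesis $P(y\mid x)\ge P(y'\mid x)$, to conclude $I(0)\ge I(P(x'))$, with strict inequality unless $P(y\mid x)=1/2$. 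Combined with convexity this shows the global max on $[0,P(x')]$ is at $t=0$, giving (1)$\Leftrightarrow$(3) and closing the cycle. The main obstacle, as noted, is purely the endpoint comparison and getting the direction of the inequality to line up with the "without loss of generality'' normalization; everything else is a one-parameter reduction plus the standard convexity-of-mutual-information-in-the-channel fact.
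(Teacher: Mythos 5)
Your proposal is correct and follows essentially the same route as the paper's proof: reduce to the single free parameter $P(Y_x=y\mid x')$, observe that (1) and (2) both pin it to zero, and use convexity of $I(Y_x;X)$ in the channel together with an endpoint comparison under the normalization $P(y\mid x)\ge P(y'\mid x)$ to identify that endpoint as the maximizer of the mutual information. Your endpoint computation $I(0)-I(P(x'))=h(P(y,x))-h(P(y',x))\ge 0$ is in fact the cleaner (and correctly signed) version of the comparison the paper carries out, so no changes are needed beyond writing out the final monotonicity step $P(y',x)\le\min\bigl(P(y,x),1-P(y,x)\bigr)$ explicitly.
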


\begin{restatable}{lemma}{binaryx}
    \label{lm:binaryx}
Let $(X, Y)$ be the pair of variables in the causal graph in \cref{subfig:dag}, where $\abs{X}=2$ and $\abs{Y} = m$. The causal effect $P(Y_x=y_p)$ attain the Tian-Pearl upper bound when $P(Y_x=y_p|x')=1$; attain the Tian-Pearl lower bound with minimum mutual information when $P(Y_x=y_i|x') = \frac{P(Y_x=y_i|X=x)}{\sum_{j\neq p}P(Y=y_j|X=x)}$ for all $i\neq p$.
\end{restatable}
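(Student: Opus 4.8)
The plan is to reduce the statement to a finite-dimensional optimization over the only ``free'' quantities in the problem — the counterfactual conditionals $P(Y_x = y_i \mid X = x')$ — and then solve it. Write $\pi = P(X = x)$, $\pi' = P(X = x') = 1-\pi$, and $\alpha_i = P(y_i \mid x)$; by the consistency property these $\alpha_i$ are fixed by the data, since $P(Y_x = y_i \mid X = x) = P(y_i \mid x)$. Let $\gamma_i = P(Y_x = y_i \mid X = x')$; the only constraints on $\gamma$ are $\gamma_i \geq 0$ and $\sum_i \gamma_i = 1$, because (as in \cref{subfig:dag}, using the SWIG relation $Y_x \indep X \mid Z$) any such $\gamma$ is realized by a compatible SCM in the response-function family. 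Then the causal effect is $P(Y_x = y_p) = \pi\alpha_p + \pi'\gamma_p$, and the counterfactual joint $P(Y_x, X)$ — hence $I(Y_x; X)$ — is a function of $\gamma$ alone given $\pi$ and $\alpha$. The Tian--Pearl bounds read $\pi\alpha_p \leq P(Y_x = y_p) \leq \pi\alpha_p + \pi'$, attained precisely at $\gamma_p = 0$ and $\gamma_p = 1$, respectively (when $\pi' > 0$; the case $\pi' = 0$ or $\pi = 0$ makes the query point-identified and the statement trivial).

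Part 1 (the upper bound) is then immediate: $P(Y_x = y_p) = \pi\alpha_p + \pi'\gamma_p$ is maximized exactly when $\gamma_p = 1$, i.e. $P(Y_x = y_p \mid x') = 1$; this forces $\gamma$ to be the point mass at $y_p$, so there is no residual freedom in the counterfactual distribution and nothing to minimize over, which is why the lemma attaches no mutual-information condition here.

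Part 2 (the lower bound) is where the work lies. Attaining the Tian--Pearl lower bound forces $\gamma_p = 0$, leaving $(\gamma_i)_{i \neq p}$ ranging over the $(m-1)$-simplex, and we must minimize $I(Y_x; X)$ over this set. Two facts drive the argument. First, for a fixed input distribution $P(X)$ the mutual information $I(Y_x; X)$ is a convex function of the conditional $P(Y_x \mid X)$, hence convex in $(\gamma_i)_{i \neq p}$; since the constraints are linear, any KKT point is therefore a global minimizer. Second, writing $m_i = \pi\alpha_i + \pi'\gamma_i$ for the marginal of $Y_x$, one computes $\partial I / \partial \gamma_i = \pi' \log(\gamma_i / m_i)$ (up to the logarithm base, which does not affect the minimizer). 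Introducing a multiplier for $\sum_{i\neq p}\gamma_i = 1$ and sign multipliers for $\gamma_i \geq 0$, stationarity on the relative interior of the active face gives $\gamma_i / m_i$ equal to a common constant for all $i \neq p$ on the support; substituting $m_i = \pi\alpha_i + \pi'\gamma_i$ and solving yields $\gamma_i \propto \alpha_i$, and the normalization $\sum_{i\neq p}\gamma_i = 1$ pins this down to $\gamma_i = \alpha_i / \sum_{j \neq p}\alpha_j = P(y_i\mid x)/\sum_{j\neq p} P(y_j\mid x)$, which is exactly the claimed expression (recalling $P(Y_x = y_j \mid X = x) = P(y_j \mid x)$). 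Convexity then promotes this KKT point to the global minimizer.

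I expect the main obstacle to be the bookkeeping around degenerate and boundary configurations rather than the core computation: one must check that $I(Y_x; X)$ is finite at the candidate point (its state-$p$ contribution is $-\pi\alpha_p\log\pi$, and the other terms are manifestly finite on the support), handle states $i \neq p$ with $\alpha_i = 0$ — for which the formula correctly returns $\gamma_i = 0$ and complementary slackness must be checked — and dispose of the edge cases $\alpha_p = 1$ (where $\sum_{j\neq p}\alpha_j = 0$ and the face degenerates) and $\pi \in \{0,1\}$ (where the query is point-identified) separately. A secondary point to state carefully is the convexity of $I(Y_x; X)$ in the free conditional: this is the standard convexity of mutual information in the channel for a fixed input distribution, and it is exactly what upgrades the stationary point found by the Lagrangian computation to a genuine minimum (and, on the support, the unique one).
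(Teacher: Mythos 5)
Your proposal is correct and follows essentially the same route as the paper's proof: the upper bound is read off directly from $P(Y_x=y_p)=P(y_p|x)P(x)+P(Y_x=y_p|x')P(x')$, and the lower bound reduces to minimizing the convex function $I(Y_x;X)$ over the free conditionals $P(Y_x=y_i|x')$ on the face $P(Y_x=y_p|x')=0$, with the stationarity condition yielding $\gamma_i\propto\alpha_i$ and convexity upgrading this to the global minimum. The only difference is computational: you obtain the stationary point via the clean partial derivative $\partial I/\partial\gamma_i=P(x')\log(\gamma_i/m_i)$ plus a Lagrange multiplier, whereas the paper reaches the same equation $\beta_i=\alpha_i/(1-\alpha_p)$ by a lengthier coordinate-wise manipulation using the grouping property of entropy; your handling of the degenerate cases ($\alpha_i=0$, $P(x)\in\{0,1\}$) is, if anything, more careful than the paper's.
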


\begin{restatable}{lemma}{binaryy}
    \label{lm:binaryy}
Let $(X, Y)$ be the pair of variables in the causal graph in \cref{subfig:dag}, where $\abs{Y}=2$ and $\abs{X}=n$. The causal effect $P(y|do(x_q))$ attain the Tian-Pearl upper bound when $P(Y_{x_q}=y|x_j)=1, \forall j \neq q$; attain the Tian-Pearl lower bound when $P(Y_{x_q}=y|x_j)=0, \forall j \neq q$.
\end{restatable}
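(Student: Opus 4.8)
The plan is to push the counterfactual decomposition behind the Tian-Pearl bounds (\cref{eq:relate}) one level further, now conditioning on \emph{every} observed value of $X$ rather than just on the event $X=x_q$ versus $X\neq x_q$. Writing $P(Y_{x_q}=y)=\sum_j P(Y_{x_q}=y, x_j)$ and applying the consistency property to the $j=q$ term gives
\[
  P(Y_{x_q}=y)=P(y,x_q)+\sum_{j\neq q}P(Y_{x_q}=y\mid x_j)\,P(x_j),
\]
where the first summand is fixed by the observational law, and the quantities $P(Y_{x_q}=y\mid x_j)$ for $j\neq q$ are exactly the free off-diagonal parameters $b_{ij}$ of \cref{thm:cfalg,table:cf_conditional} that the optimization ranges over.

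The next step is to read off the extremes. Each coefficient $P(x_j)$ is nonnegative and each $P(Y_{x_q}=y\mid x_j)\in[0,1]$, so the right-hand side is minimized precisely when $P(Y_{x_q}=y\mid x_j)=0$ for all $j\neq q$, yielding $P(y,x_q)$, the Tian-Pearl lower bound; and it is maximized precisely when $P(Y_{x_q}=y\mid x_j)=1$ for all $j\neq q$, yielding $P(y,x_q)+\sum_{j\neq q}P(x_j)=P(y,x_q)+1-P(x_q)=1-P(y',x_q)$, the Tian-Pearl upper bound, where binarity of $Y$ is used to write $P(x_q)-P(y,x_q)=P(y',x_q)$. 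The ``precisely when'' is because a sum of nonnegatively weighted terms in $[0,1]$ attains its extreme only if each term does (for every $j\neq q$ with $P(x_j)>0$, so that the conditional is well defined), which gives both directions of the stated characterization.

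The only remaining point requiring care is feasibility: one must exhibit, for these extreme profiles of $\{P(Y_{x_q}=y\mid x_j)\}_{j\neq q}$, an actual SCM with the confounding structure of \cref{subfig:dag} that induces the given $P(X,Y)$ and realizes them. I would do this with the canonical-partition / response-variable construction: set $P(R_x=j)=P(x_j)$, and for each $j\neq q$ let $R_y\mid (R_x=j)$ be supported on response functions sending $x_q\mapsto y$ (respectively $x_q\mapsto y'$) while placing mass $P(y_i\mid x_j)$ on those additionally sending $x_j\mapsto y_i$; for $R_x=q$, place mass $P(y_i\mid x_q)$ on functions sending $x_q\mapsto y_i$. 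One then checks that the $X,Y$-marginal of this $P(R_x,R_y)$ reproduces $P(X,Y)$ via the linear relations of \cref{sec:bound_partition} (transparent since $Y$ is binary) and that the induced counterfactual conditionals are the desired extremes; since no entropy bound is imposed in this lemma, $Z$ (equivalently $(R_x,R_y)$) may be arbitrary. Combining the decomposition, the extreme-point analysis, and this construction proves the claim. I expect this feasibility check to be the main obstacle, but the binary-outcome structure makes the marginalization constraints explicit enough that it reduces to routine bookkeeping, in parallel with \cref{lm:binaryx}.
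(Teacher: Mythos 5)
Your proposal is correct and follows essentially the same route as the paper's proof: decompose $P(Y_{x_q}=y)$ by the law of total probability over the states of $X$, fix the $j=q$ term by consistency, and observe that the Tian--Pearl extremes are attained exactly when the remaining conditionals $P(Y_{x_q}=y\mid x_j)$, $j\neq q$, all equal $1$ (resp.\ $0$). The feasibility construction you add at the end is not in the paper's proof of this lemma (the paper implicitly relies on the achievability of the Tian--Pearl bounds established elsewhere), but it is a harmless and correct supplement.
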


The above lemmas build the link between the bounds of causal effect and the mutual information of counterfactual distribution. One can think of $b_{ij}$ as unknown conditional probabilities as shown in \cref{table:cf_conditional}. The $q$-th column (black) equals the conditional distribution $P(Y|x_q)$, which serves as the constraint from observational distribution. The causal effect is maximized when all entries in $p$-th row (red) are equal to one; minimized when they are equal to zero.

Next, the following theorem shows the relation between observational distribution $P(X, Y)$ and the entropy threshold.

\begin{restatable}{theorem}{entropyconstraint}
    \label{thm:entropyconstraint}
    Let $(X, Y)$ be a pair of variables in a causal graph $G$ as shown in \cref{subfig:dag}, where either $X$ or $Y$ is binary.  Let $(U,V)$ be two binary variables such that $P(v_0|u_0) = P(y_p|x_q)$, $P(v_1|u_0) = 1-P(y_p|x_q)$, and $P(u_0) = P(x_q)$. The entropy threshold for the bounds of $P(y_p| do(x_q))$ is equal to $\max(I(U; V))$.
\end{restatable}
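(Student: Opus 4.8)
\textbf{Proof proposal for \cref{thm:entropyconstraint}.}

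The plan is to reduce the general ``either $X$ or $Y$ binary'' case to a clean two-binary-variable computation, and then identify the entropy threshold with a constrained maximization of $I(U;V)$. First I would invoke \cref{lm:binaryx} and \cref{lm:binaryy} to pin down exactly which feasible point of the optimization in \cref{thm:cfalg} attains the Tian-Pearl upper (resp. lower) bound. These lemmas tell us that at the extreme point the ``off-diagonal'' counterfactual conditionals $P(Y_{x_q}=\cdot\,|x_j)$ for $j\neq q$ are forced to be either deterministic ($=1$ in the row $p$) or, for the lower bound, proportional to $P(Y|x_q)$ restricted off row $p$. In either binary case, this collapses the free parameters so that the only quantity that still varies is captured by the counterfactual $P(Y_{x_q}\,|\,X)$ restricted to two effective columns/rows, which is precisely the object parameterized by $(U,V)$ in the statement: $U$ plays the role of whether $X=x_q$ or not (with $P(u_0)=P(x_q)$), and $V$ plays the role of whether $Y_{x_q}=y_p$ or not, with the conditional $P(V|u_0)$ matched to the observed $P(Y|x_q)$ marginalized to the event $\{y_p\}$ vs its complement.

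The second step is to argue that the entropy threshold equals the mutual information $I(Y_x;X)$ evaluated \emph{at} that extreme feasible point, and that this value, in turn, equals $\max(I(U;V))$ over the choices of $P(V|u_1)$ consistent with the constraints. The ``$\le$'' direction uses \cref{lm:binaryxy} (or its multi-valued analogues via \cref{lm:binaryx}, \cref{lm:binaryy}): the Tian-Pearl bound is attained iff $I(Y_x;X)$ is maximal, so any entropy budget $\theta$ strictly below this maximal mutual information excludes the extreme point and hence yields a strictly tighter bound; any $\theta$ at or above it admits the extreme point and recovers Tian-Pearl. Thus the threshold is exactly the maximal feasible $I(Y_x;X)$. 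The ``$=\max I(U;V)$'' direction is then a matter of checking that, after the reduction in the first step, the feasible set of counterfactual tables is in bijection (value-preserving on the mutual information) with the feasible set of joint distributions on $(U,V)$ having the prescribed $P(u_0)=P(x_q)$ and $P(V|u_0)$ determined by $P(y_p|x_q)$, with $P(V|u_1)$ free in $[0,1]$. Under this bijection $I(Y_x;X)=I(U;V)$ termwise, so the maximum of one equals the maximum of the other.

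I would organize the write-up as: (i) fix $x_q,y_p$; (ii) case $\abs{X}=2$ — apply \cref{lm:binaryx}, show the extremal counterfactual table has only the column $x'$ free and its entries in row $p$ vs not-$p$ are the only degrees of freedom relevant to $I(Y_x;X)$, collapse rows $\neq p$ into one state to get $(U,V)$; (iii) case $\abs{Y}=2$ — apply \cref{lm:binaryy}, symmetric argument collapsing columns $\neq q$; (iv) in both cases verify $I(Y_x;X)$ at the extreme point equals $I(U;V)$ for the induced binary pair and that maximizing over the remaining free conditional(s) gives $\max I(U;V)$; (v) conclude via the equivalence ``extreme point feasible $\iff$ $\theta\ge\max I(U;V)$'' that the threshold is $\max I(U;V)$.

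The main obstacle I anticipate is step (iv) in the case $\abs{X}=2,\abs{Y}=m$: there the lower bound is attained not at a deterministic table but at the proportional assignment from \cref{lm:binaryx}, so I must check carefully that collapsing the $m-1$ outcomes $y_i$ ($i\neq p$) into a single ``not $y_p$'' state is \emph{lossless} for the mutual information $I(Y_x;X)$ — i.e., that the data-processing step of merging those states does not decrease $I(Y_x;X)$ at the relevant extremal configuration, which holds precisely because at that configuration the conditional law of $Y_x$ given $X=x_q$ and given $X=x'$ agree on the relative proportions within $\{y_i\}_{i\neq p}$. Getting the direction of this merging argument right (and confirming it yields exactly the binary $I(U;V)$, with the free parameter $P(V|u_1)$ corresponding to the total off-row-$p$ mass in column $x'$) is the delicate part; everything else is bookkeeping on top of the three lemmas.
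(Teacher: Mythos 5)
Your proposal matches the paper's proof in all essentials: it invokes \cref{lm:binaryx} and \cref{lm:binaryy} to identify the extremal counterfactual tables, evaluates $I(Y_x;X)$ there, and shows it coincides with the binary $I(U;V)$ from \cref{lm:binaryxy}, concluding by contraposition — and the ``lossless merging'' of the outcomes $\{y_i\}_{i\neq p}$ that you flag as the delicate step is precisely what the paper carries out via the grouping property of entropy at the proportional assignment. The only caution is your mid-proof phrase ``the threshold is the maximal feasible $I(Y_x;X)$'': the correct quantity is the \emph{minimum} mutual information over feasible points attaining the Tian-Pearl bound (as your own step (iv) and the ``minimum mutual information'' clause of \cref{lm:binaryx} already reflect), so this is a wording slip rather than a gap.
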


By \cref{thm:entropyconstraint}, we can compute the entropy threshold for a given distribution $P(X, Y)$. Then if we know that the confounder is simple, i.e., with entropy less than the threshold, we can use the entropy constraint to obtain a tighter bound.

\cref{fig:required_MI} shows the entropy threshold for different value of $P(x)$ and $P(y|x)$. The entropy threshold is higher when $P(x)$ is close to $0.5$. For fixed $P(x)$, the threshold increases as $P(y|x)$ is close to $0$ or $1$, which corresponds to the causal effect's lower and upper bound. Without entropy constraint, the gap between bounds is only related to $P(x)$.

Following from \cref{thm:entropyconstraint}, the entropy threshold of $P(y_p|do(x_q))$ only depends on the value of $P(x_q)$ and $P(y_p|x_q)$. So we sample $P(x)$ from $0.01$ to $0.8$ and $P(y|x)$ from $0$ to $1$. Then let the $p(Y|x')$ be uniform distributions.
For each pair of $p(x)$ and $p(y|x)$, we calculate the bounds with entropy constraint for each distribution from $1$ to $0$. The results in \cref{fig:conf_x2y2} demonstrate the gap between our bounds vanishes as entropy goes to 0. The entropy threshold is small when $P(x)$ is close to $0$ or $1$ and $P(y|x)$ is close to $0.5$. On the other hand, the entropy threshold is high when $P(x)$ is close to $0.5$ and $P(y|x)$ is close to $0$ or $1$. For a fixed conditional probability and entropy constraint, the gap between bounds decreases monotonically with $P(x)$.

\section{Experiments}\label{sec:experiment}
We demonstrated our method with simulated and real-world datasets in this section. First, we show the behavior of the bounds with randomly sampled distributions $P(X, Y)$. We change the entropy constraint $\theta$ from $1$ to $0$ for each sampled distribution. We also experiment with the full distribution $P(X, Y, Z)$ where $Z$ is the low entropy confounder and $X, Y$ in high dimensions.
We show the experimental results with the real-world dataset Adult \cite{Dua:2019}. Since our algorithm works for discrete random variables with binary treatment or outcome, we take a subset of features in the graph and modify some features by discretizing continuous variables or combining states with very low probabilities. And finally, we experiment with our method in the finite sample setting and compare two optimization problem formulations.

\begin{figure*}[t!]
    \centering
     \subfloat[$\abs{X}=2, \abs{Y}=2$]{\includegraphics[width=0.33\linewidth]{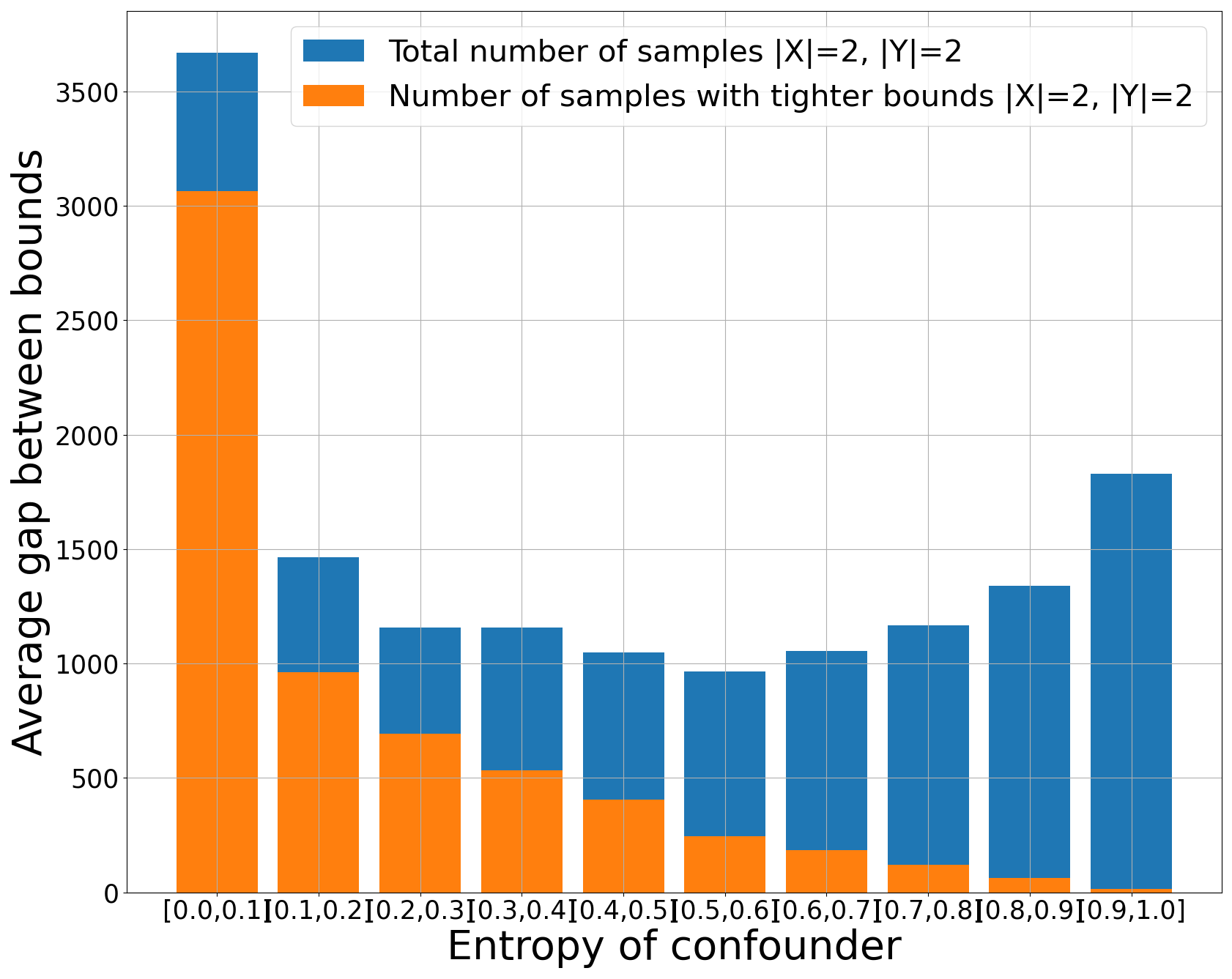}\label{subfig:x2y2}}
    \hfill
    \subfloat[$\abs{X}=2, \abs{Y}=10$]{\includegraphics[width=0.33\linewidth]{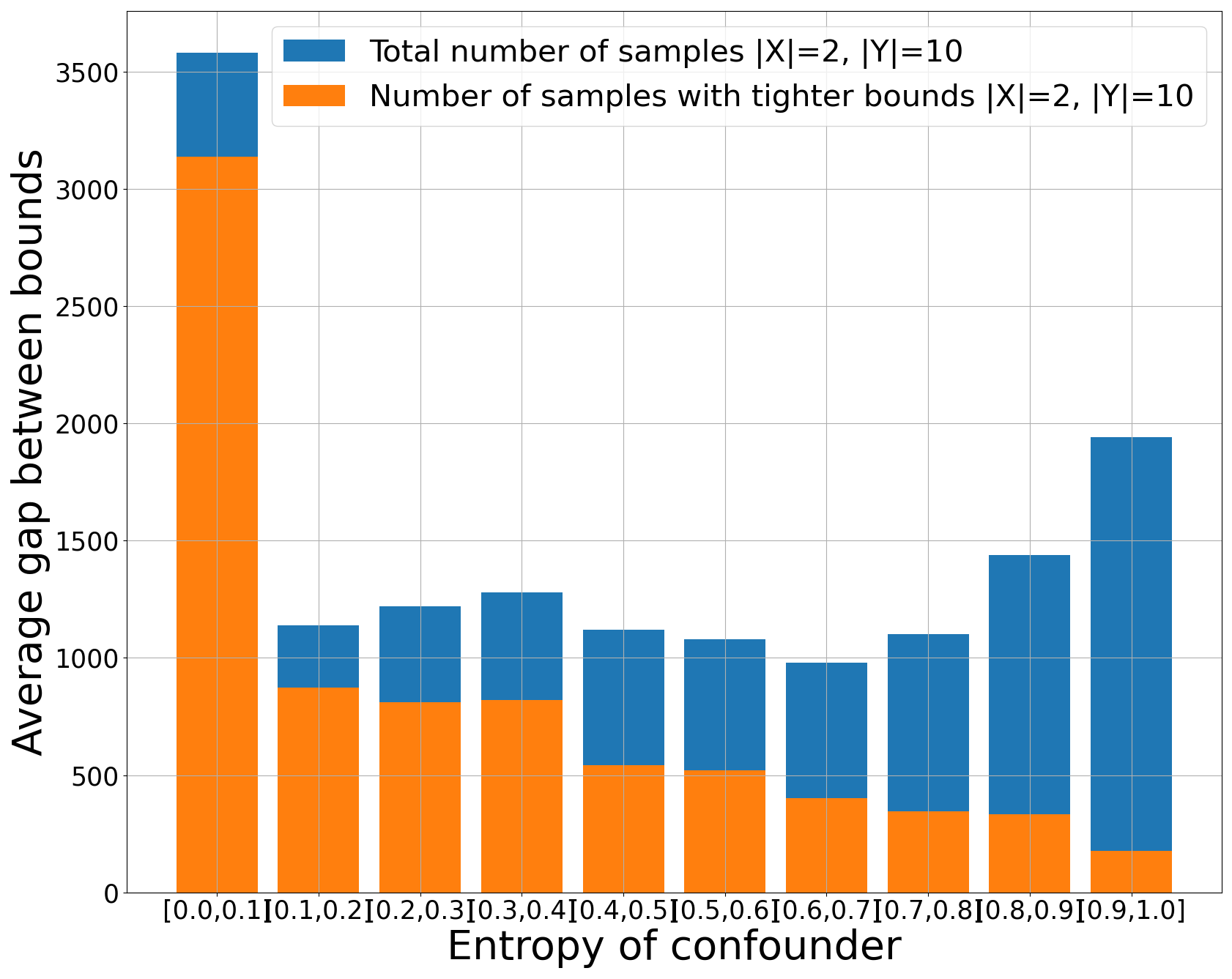}\label{subfig:x2y10}}
   \subfloat[$\abs{X}=10, \abs{Y}=2$]{\includegraphics[width=0.33\linewidth]{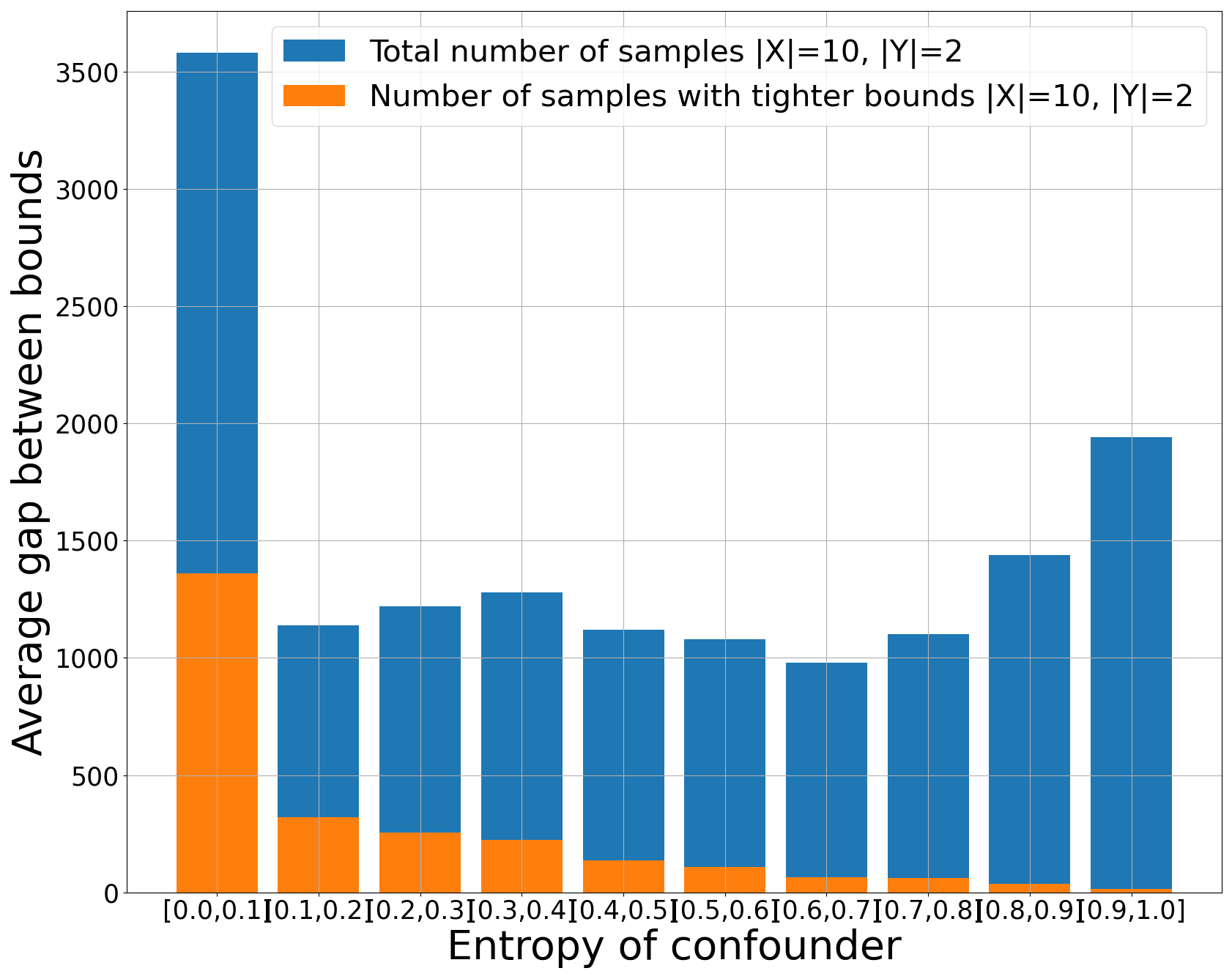}\label{subfig:x10y2}}
    \caption{The number of samples with tighter bounds. The blue bars represent the total number of distributions in each group and the orange bars show the number of distributions with tighter bounds.}
    \label{fig:tight_counts}
\end{figure*}

\subsection{Performance on Randomly Sampled Distributions}
\label{subsec:random}

First, we want to compare the gaps of our bounds and Tian-Pearl bound. We use randomly sampled data to compare the gaps. We sample the full joint distribution $P(X,Y,Z)$ according to the \cref{subfig:dag}. Then we treat $P(X, Y)$ as observational data and variable $Z$ as the unobserved confounder and estimate the causal effect using the entropy of $Z$ as $\theta$. The details for sampling the full distribution are in \cref{sec:sampling}. 
We tested three cases: $\left(\abs{X}=2,\abs{Y}=2\right), \left(\abs{X}=2,\abs{Y}=10\right)$ and $\left(\abs{X}=10,\abs{Y}=2\right)$. For each case, we generate $20000$ distributions and compute the bounds $P(y_i|do(x_j))$ for each pair of $(i,j)$. The result is shown in \cref{fig:averagebounds}. To enhance the interpretability of the results, we group the samples based on the entropy of the confounder and compare the average gap for each entropy group. For example, we consider entropy ranges such as $H(Z)\in [0,0.1)$, $[0.1,0.2)$, and so on. Notice that the average gap is smaller when $X$ is binary. This is mainly because a larger portion of samples with $P(x)$ close to $0.5$ has a larger entropy threshold. We demonstrate this by plotting the number of samples that yields a tighter bound in \cref{fig:tight_counts}. When $\abs{X}$ is large, it is less likely to have $P(x)$ close to $0.5$, and as the \cref{fig:required_MI} shows, the entropy threshold is low when $P(x)$ is close to $0$ or $1$, so there is a small number of distributions yields tighter bounds as shown in \cref{subfig:x10y2}. On the other hand, when $\abs{X}=2$, it is more likely to obtain $P(x)$ that close to $0.5$ while $P(y|x)$ is close to the boundary. So the entropy threshold is higher on average, and more distributions with tighter bounds are shown in \cref{subfig:x2y2} and \cref{subfig:x2y10}.

Next, we will consider experiments in a more realistic setting and see how the entropy constraint could be useful in the real-world problem of causal inference.

\begin{figure}
    \centering
    \subfloat[ADULT]{\includegraphics[width=0.35\linewidth]{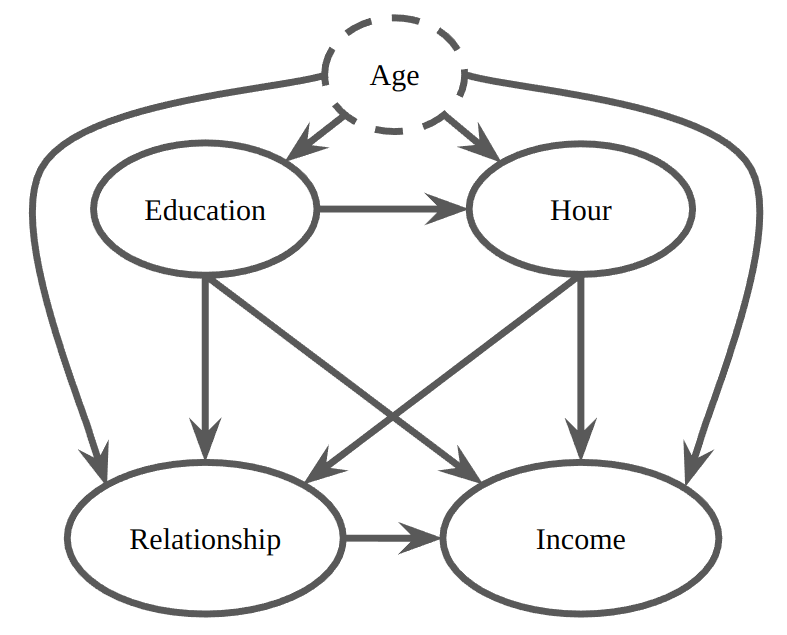}\label{subfig:adult}}
    \hspace{0.2\linewidth}
    \subfloat[INSURANCE]{\includegraphics[width=0.25\linewidth]{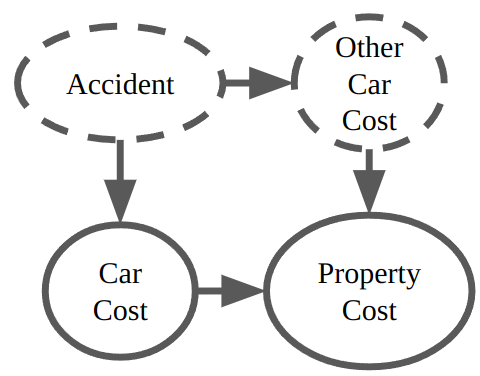}\label{subfig:insurance}} 
    \caption{Causal graphs for the real-world experiments. Only a subset of nodes in the dataset is shown in the figure. Unused variables are omitted.}
    \label{fig:real_exp}
\end{figure}

\subsection{Real-World Dataset Experiments}

\begin{table*}[tbh]
\footnotesize
\caption{Results of Causal Effect in INSURANCE and ADULT dataset}
\label{table:real_exp}
\vskip 0.15in
\begin{center}
\begin{sc}
\begin{tabular}{l |c |c c |c |c | r}
\toprule
Dataset & Subgroup &X &  Y & H(Z) & Our bounds & T-P bounds \\
\midrule
Insur & Under 5,000 miles, normal& Car Cost & Prop Cost & Acci \\ \cline{3-4}
&  & 100,000 & 10,000 & 0.092 & [0.000, \blue{0.246}] & [0.000, 0.800]\\
& & 100,000 & 100,000 & 0.092 & [\blue{0.699}, 0.996] & [0.196, 0.996]\\
& & 100,000 & 1,000,000 & 0.092 & [0.004, \blue{0.301}] & [0.004, 0.804]\\
& & 1,000,000 & 10,000 & 0.092 & [0.000, \blue{0.044}] & [0.000, 0.249]\\
& & 1,000,000 & 100,000 & 0.092 & [0.000, \blue{0.044}] & [0.000, 0.249]\\
& & 1,000,000 & 1,000,000 & 0.092 & [\blue{0.956}, 0.999] & [0.751, 0.999]\\
\midrule
Adult& & Relationship & Income & Age \\ \cline{3-4}
& below high school, full-time & Yes & $<=50$K & 0.21 & [\blue{0.605}, 0.934]& [0.423, 0.934] \\
& below high school, full-time & No & $<=50$K & 0.21 & [\blue{0.762}, 0.985]& [0.496, 0.985] \\
& below high school, full-time & Yes & $>50$K & 0.21 & [0.066, \blue{0.395}]& [0.066, 0.577] \\
& below high school, full-time & No & $>50$K & 0.21 & [0.015, \blue{0.238}]& [0.015, 0.504] \\
& above high school, part-time & Yes & $<=50$K & 0.41 & [\blue{0.186}, 0.903]& [0.183, 0.903] \\
& above high school, part-time & No & $<=50$K & 0.41 & [\blue{0.779}, 0.982]& [0.703, 0.983] \\
& above high school, part-time & Yes & $>50$K & 0.41 & [0.017, \blue{0.814}]& [0.096, 0.817] \\
& above high school, part-time & No & $>50$K & 0.41 & [0.017,\blue{0.220}]& [0.017, 0.297] \\
& above high school, full-time & Yes & $<=50$K & 0.12 & [\blue{0.310}, \blue{0.664}]& [0.250, 0.734] \\
& above high school, full-time & No & $<=50$K & 0.12 & [\blue{0.725}, 0.953]& [0.438, 0.953] \\
& above high school, full-time & Yes & $>50$K & 0.12 & [\blue{0.336}, \blue{0.690}]& [0.266, 0.750] \\
& above high school, full-time & No & $>50$K & 0.12 & [0.046, \blue{0.275}]& [0.046, 0.562] \\

\bottomrule
\end{tabular}
\end{sc}
\end{center}
\vskip -0.1in
\end{table*}

In this section, we experiment with the INSURANCE dataset \cite{binder1997adaptive} and the ADULT dataset \cite{Dua:2019}.

For the INSURANCE dataset, we aim to estimate the causal effect of Car Cost on the expected claim of the Property Cost. We consider the variable Accident as an unobserved variable with known entropy. The Car Cost and Property Cost claim are confounded through the cost of the other car in the accident as shown in \cref{subfig:insurance}. The results in \cref{table:real_exp} indicate narrow bounds on the causal effect when the entropy of the confounder is small. Therefore, we can have confidence in predicting the expected claim based on car cost, even in the presence of the confounding variable. 

For the ADULT Dataset \cite{Dua:2019}, we take a subset of variables from the dataset with the causal graph as shown in \cref{subfig:adult}. In this experiment, we treat age as a protected feature, which may not be accessible from the dataset, and only the entropy of age is known. If we assume age not having a too complex effect on other variables, i.e., the causal effects of any variable to the income is not much different for groups of people under $65$ on average; and similarly for groups of people above $65$. The above assumption  enables us to discretize the age variable into two categories: ``young'' and ``senior'', using a cutting point of 65. Since there are other confounding variables between cause and effect, we take the conditional joint distribution as the subgroup and compute the bounds. Some of the results are summarized in \cref{table:real_exp}. One way to interpret the results is to determine whether the causal effect is positive or negative. Our tighter bounds can help in establishing a positive causal effect by comparing the lower bound of $P(Y=1|do(X=1))$ with the upper bound of $P(Y=1|do(X=0))$. Similarly, for a negative causal effect, we would compare the upper bound of $P(Y=1|do(X=1))$ with the lower bound of $P(Y=1|do(X=0))$. For instance, in \cref{table:real_exp}, for the subgroup of the population with high school or higher education and full-time jobs, the relationship has a positive effect on income. This can be seen by comparing the lower bound of $P(Income>50K|do(relationship=1))$ and the upper bound of $P(Income>50K|do(relationship=0))$. Our method of bounding causal effect can be used in decision-making processes involving such scenarios. 

 In the real-world setting, we could use expert knowledge for the complexity of confounders. Even if the confounder has many states, we could still assume the confounder has small entropy if we know many of these states may have a similar effect on the outcome.


\subsection{Finite Sample Experiments}
In this section, we conducted experiments using our method in the finite data regime, aiming to estimate bounds from finite data samples. We test cases where $\left(\abs{X}=2,\abs{Y}=2\right)$, $ \left(\abs{X}=2,\abs{Y}=10\right)$, and $\left(\abs{X}=10,\abs{Y}=2\right)$. Similar to \cref{subsec:random}, we generate 1000 distributions for each case. We use $\{10, 10^2, 10^3, 10^4\}$ samples from each distribution and compute the bounds of casual effect $P(y_i|do(x_j))$ for each pair of $(i,j)$ using the empirical distributions. To evaluate the accuracy, we estimate the causal effect with the midpoint of bounds and calculate the average error of each group. The results are shown in \cref{fig:finitesample_error}. 
Our method has a smaller average error than the Tian-Pearl bounds for the cases $H(Z)\leq 0.8$. For $H(Z)\leq 0.2$, the average error drops rapidly as the number of samples increases. This demonstrates that our method improves the causal effect estimation with finite data.

\begin{figure}[t]
    \centering
    \subfloat[$\abs{X}=2, \abs{Y}=2$]{\includegraphics[width=0.33\linewidth]{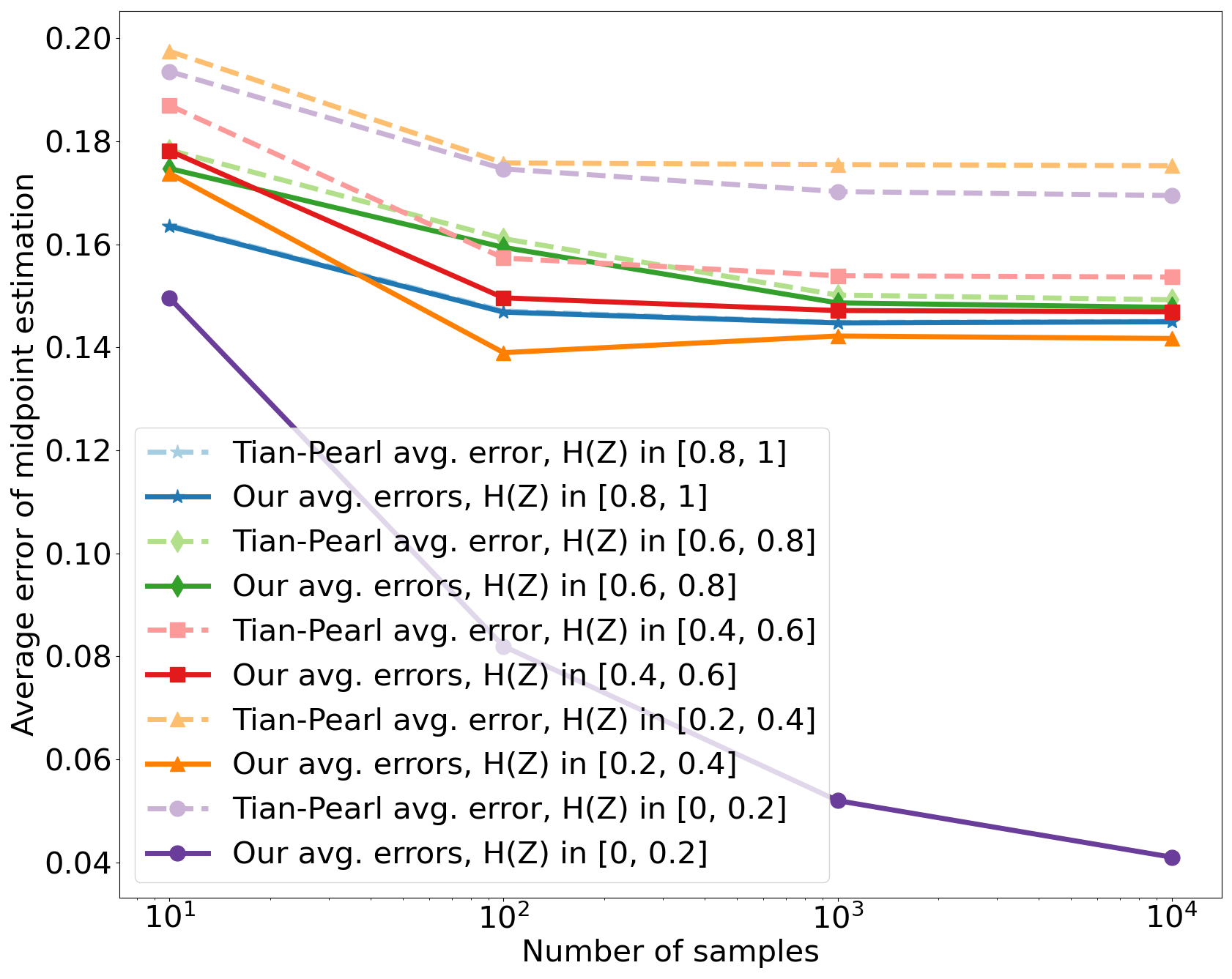}}
    \hfill
    \subfloat[$\abs{X}=2, \abs{Y}=10$]{\includegraphics[width=0.33\linewidth]{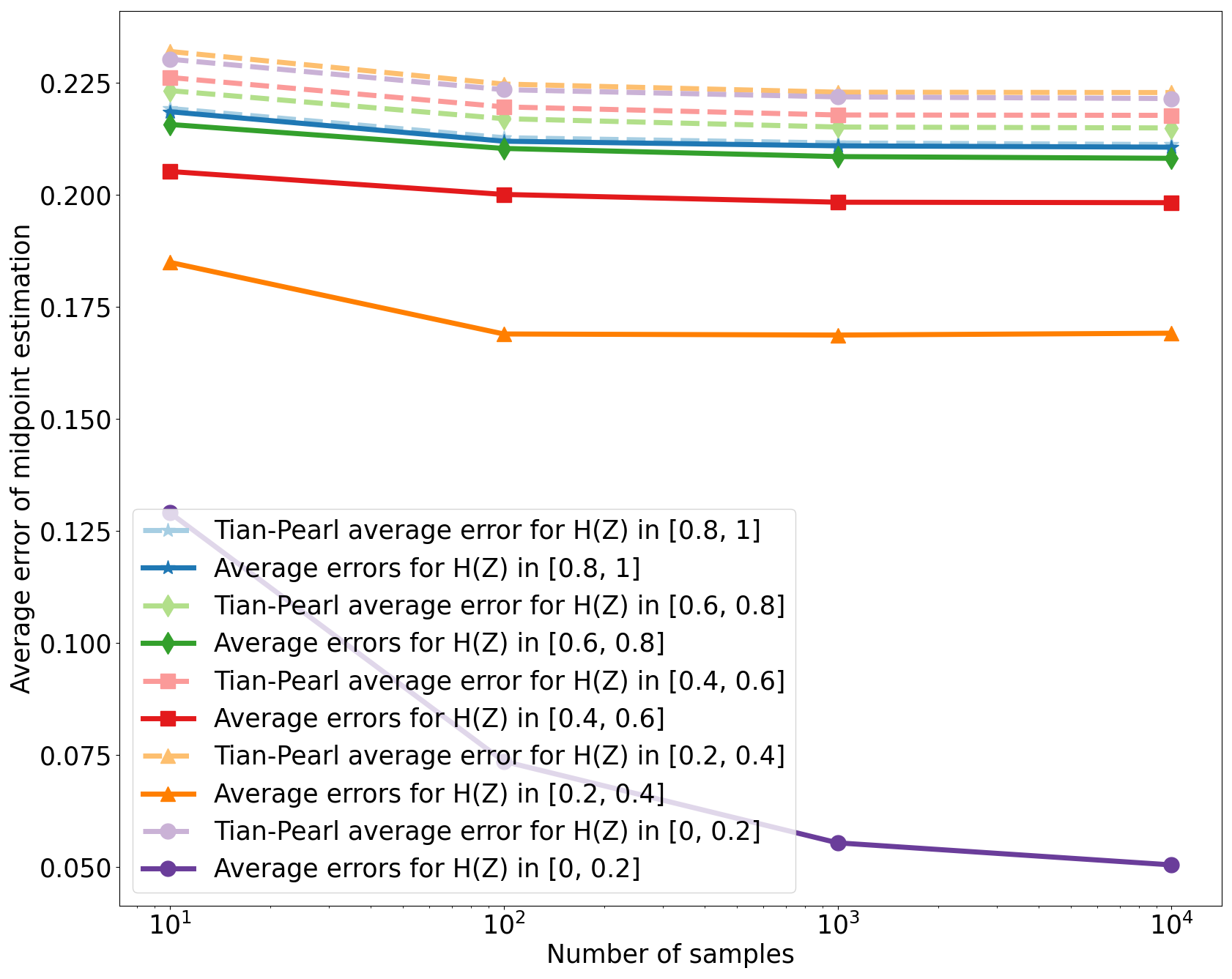}}
    \hfill
    \subfloat[$\abs{X}=2, \abs{Y}=10$]{\includegraphics[width=0.33\linewidth]{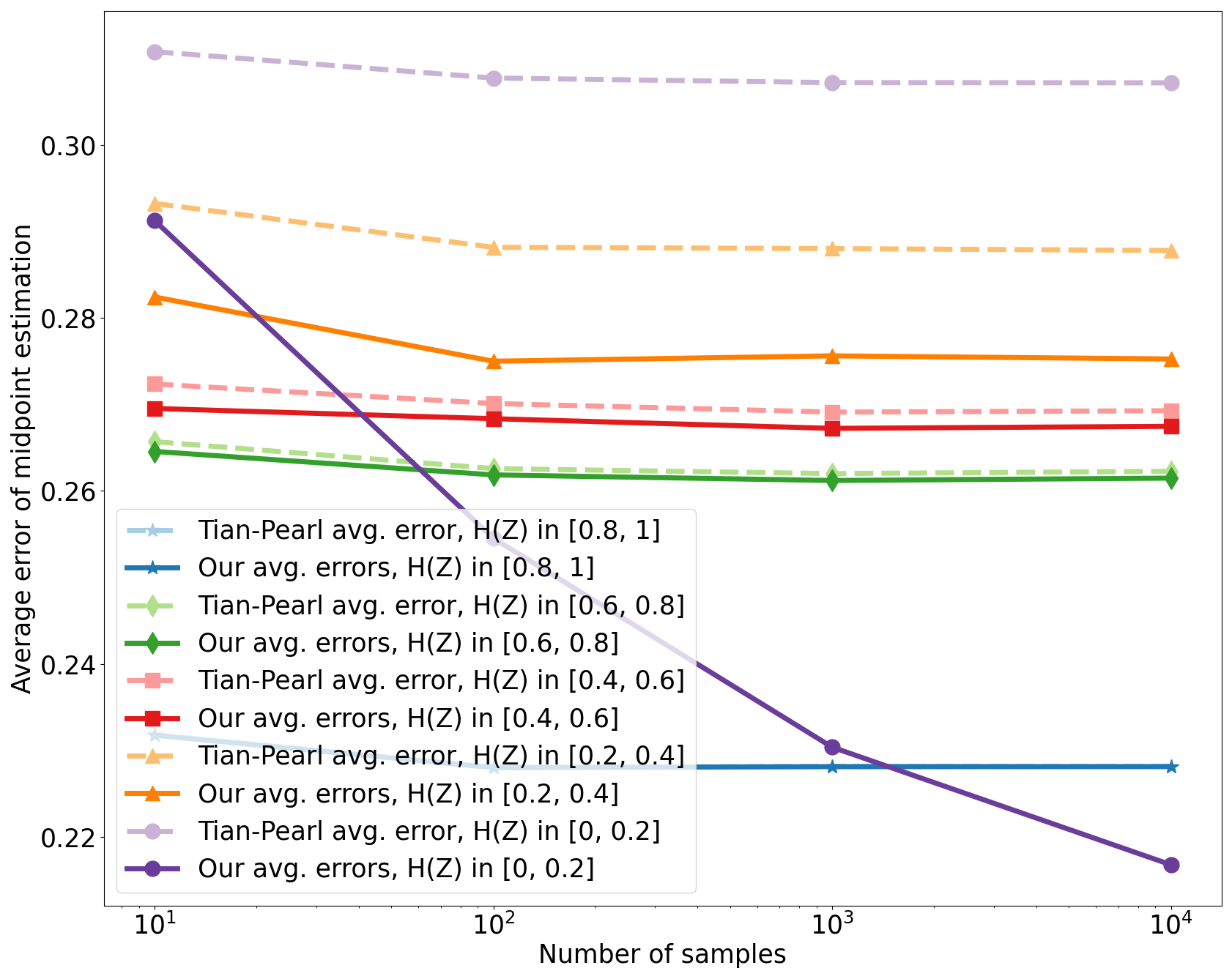}}
    \caption{The average error of midpoint estimation with finite samples. The dashed lines are average errors with the Tian-Pearl midpoint estimation, and the solid lines are the average error with our midpoint estimation.}
    \label{fig:finitesample_error}
\end{figure}

\subsection{Comparison of Formulations with Canonical Partition and the Counterfactual Distribution}
\label{subsec:compare}

\cref{eq:relate} demonstrates the equivalence of two bounds formulations for binary $X$ and $Y$ without entropy constraints. It can be extended to arbitrary discrete variables straightforwardly. For the entropy constraint in \cref{thm:cpalg}, we can envision a table similar to \cref{table:cf_conditional} where $p$-th column and $q$-th row are divided into multiple columns and rows. Intuitively, \cref{thm:cpalg} is an over-parameterization version of \cref{thm:cfalg}. In terms of partial identification, those two methods are identical. We verify this with an experiment similar to \cref{subsec:random}. We apply both methods with $\abs{X}, \abs{Y} = \{2, 4, 8\}$. For each case, we generate $100$ distributions and compute the bounds $P(y_i|do(x_j))$ for each pair of $(i,j)$. The experiments indicate that the two approaches have the same optimal values within a precision of three decimal places. The \cref{table:compare} provides the number of parameters and average runtime for the two methods. The counterfactual formulation is significantly more efficient when $\abs{X}$ is large.

\begin{table}[bth]
\footnotesize
\caption{Comparison of methods of \cref{thm:cpalg} and \cref{thm:cfalg}}
\label{table:compare}
\vskip 0.15in
\begin{center}
\begin{sc}
\begin{tabular}{|l |c |c |c |c |c | c |r}
\toprule
& & \multicolumn{2}{c|}{\cref{thm:cpalg}} & \multicolumn{2}{c|}{\cref{thm:cfalg}}  \\
$\abs{X}$& $\abs{Y}$& \vtop{\hbox{\strut Num of}\hbox{\strut Param}} & \vtop{\hbox{\strut Ave }\hbox{\strut time}}  &\vtop{\hbox{\strut Num of}\hbox{\strut Param}} & \vtop{\hbox{\strut Ave }\hbox{\strut time}}  \\
\midrule
 2 & 2 & 8 & 0.54s & 4 & 0.19s \\
 2 & 4 & 32 & 3.37s & 8 & 1.03s \\
 2 & 8 & 128 & 18.39s & 16 & 2.94s \\
 4 & 2 & 64 & 13.35s & 8 & 1.62s \\
 8 & 2 & 2048 & 1138.72s & 16 & 5.06s\\

\bottomrule
\end{tabular}
\end{sc}
\end{center}
\vskip -0.1in
\end{table}

\section{Conclusion}
In this paper, we proposed a way to utilize entropy to estimate the bounds of the causal effect. We formulate optimization problems with counterfactual probability, which significantly reduces the number of parameters in the optimization problem. We demonstrate a method to compute the entropy threshold easily so that we can use the entropy threshold as a criterion for applying entropy constraint. For the real-world problem, if we know that two variables are confounded by a confounder with entropy no more than the entropy threshold, we can apply the method and obtain tighter bounds. Another possible scenario where our method can be applied is when the distribution of the confounder is not provided as a joint distribution. Instead, only the marginal distributions $P(Z)$, $P(X, Y)$ are available, as in the example presented by Li and Pearl \yrcite{li2022bounds}. In such cases, our method can be utilized to derive tighter bounds.

Our optimization methods work for any discrete $X, Y$. Only the computation of the entropy threshold requires either $X$ or $Y$ being binary. For future works, it would be worthwhile to explore the tight-bounds condition for non-binary $X, Y$. To obtain entropy thresholds that scale with the number of states of the observed variables, one might need to consider the dependence between different queries $P(y|do(x_j))$ for $j\in \abs{X}$.

\section*{Acknowledgements}
This research has been supported in part by NSF Grant CAREER 2239375.

\bibliography{paper}
\bibliographystyle{icml2023}
\balance
\newpage
\appendix
\onecolumn

\section{Proof of \cref{thm:cpalg}}\label{pf:cpalg}
Recall the \cref{thm:cpalg}.
\cpalg*
\begin{proof}
    To show the LB and UB bound the causal effect, we first need to show the causal effect lies in the feasible set of the optimization problem. 

    Let $(R_y, R_x)$ be a canonical partition of $(X,Y)$. Let $a_{ij}$ = $P(R_y= i|R_x=j)$. By the construction of finite response variables, each $P(y_i|x_q)$ equal to the sum of $m^{n-1}$ terms of $P(R_y|R_x)$. Let $D_k$ be a set of indices such that $\sum_{i\in D_k} P(R_y=i|R_x=q) = P(y_i|x_q)$ and $\sum_{i\in D} P(R_y=i|R_x=q) = 1$ where $D = \bigcup_{i\in[n]} D_i$. Then we have the following relations. 
    \begin{align*}
        &\sum_{ij} a_{ij}P(x_j) = \sum P(R_y, R_x) = 1\\
        &\sum_{i=0}^{m^{n-1}} a_{iq} P(x_q) = \sum_{i\in D_k} P(R_y=i|R_x=q)P(x_q) = P(y_p,x_q)\\
        & \sum_{i,j}  a_{ij}P(x_j) \log\left(\frac{a_{ij}}{\sum_k a_{ik}P(x_k)}\right)\\
        &= I(R_x;R_y) \leq \theta.
    \end{align*}
    Since $R_x$ and $R_y$ are d-separated by the confounder, by the data processing inequality, the mutual information between $R_x, R_y$ is less than the entropy of the confounder. So the last inequality holds. Therefore we have $P(y_0|do(x_0))$ is in the feasible set.\\
    Since mutual information is a convex function of the conditional distributions, the set of $a_{ij}$ satisfies  $I(R_x; R_y)\leq \theta$ is convex. The objective function and all other constraints are linear functions of $a_{ij}$, so the optimization problem is convex and obtains global optimal in the feasible set. 
\end{proof}
    We use the CVXPY package to solve the problem and formulate the constraint according to the Disciplined Convex Programming rules.

 \section{Proof of \cref{thm:cfalg}}\label{pf:cfalg}
 Recall the \cref{thm:cfalg}.
 \cfalg*
\begin{proof}
    To show the LB and UB bound the causal effect, we first need to show the causal effect lies in the feasible set of the optimization problem. 

    Let $P(Y_{x_q}, X)$ be the counterfactual distribution for $x_q\in X$. Let $b_{ij} = P(Y_{x_q}=y_i|x_j)$, Then we have the following
    \begin{align*}
        &\sum_{ij} b_{ij}P(x_j) = \sum P(R_y, R_x) = 1\\
        \\
        &b_{iq}P(x_q) = P(Y_x=y_i|x_n)P(x_q) = P(y_i, x_q)\;\forall i\\
        \\
        & \sum_{i,j}  b_{ij}P(x_j) \log\left(\frac{b_{ij}}{\sum_k b_{ik}P(x_k)}\right)\\
        &= I(Y_x;X) \leq \theta.
    \end{align*}
    Since $Y_x$ and $X$ are d-separated by the confounder, by the data processing inequality, the mutual information between them is less than the entropy of the confounder. So the last inequality holds. Therefore we have $P(y_0|do(x_0))$ in the feasible set.
    
    Since mutual information is a convex function of the conditional distributions, the set of $b_{ij}$ satisfies  $I(Y_x; X)\leq \theta$ is convex. The objective function and all other constraints are linear functions of $b_{ij}$, so the optimization problem is convex and obtains global optimal in the feasible set. 

\end{proof} 
 We use the CVXPY package to solve the problem and formulate the constraint according to the Disciplined Convex Programming rules. 
 
\section{Proof of \cref{lm:binaryxy}} \label{pf:binaryxy}
Recall the \cref{lm:binaryxy}
\binaryxy*
\begin{proof}
By the law of total probability, we have that 
$$P(Y_x = y) = P(Y_x=y| x)P(x) + P(Y_x=y| x')P(x'),$$ 
and similarly 
$$P(Y_x = y') = P(Y_x=y'| x)P(x) + P(Y_x=y'|x')P(x').$$
From the observational distribution, we have $ P(Y_x=y|x)= P(y|x)$ , $ P(Y_x=y'|x) = P(y'|x)$. Denote $p = P(Y_x=y|x')$, $1-p = P(Y_x=y'|x')$.\\

We first show the case $P(y'|x)\leq P(y|x)$.\\
($1\implies 2$) Assume $P(Y_x=y)$ attain the Tian-Pearl lower bound, i.e. $ P(Y_x=y) = P(y,x)$. Since $P(Y_x=y|x) = P(y|x)$, we have $P(Y_x=y|x')P(x')=0$. Since $P(x')>0$,  $P(Y_x=y|x')=0$, so  $P(Y_x=y'|x')=1$. Then we have $P(Y_x=y') = P(Y_x=y'|x)P(x) + P(x') = 1-P(x, y)$ attain the Tian-Pearl upper bound. Thus $1\implies 2$.

($2\implies 3$) Assume $P(Y_x=y')$ attain the Tian-Pearl upper bound, we have $P(Y_x=y'|x')=1$ and $P(Y_x=y|x') =0$. We want to show that the mutual information is maximized when $p=1$. Since $I(Y_x; X)$ is a convex function of $P(Y_x|X)$, it is a convex of $p$. $I(Y_x;X) =0$ when $p=P(Y_x=y|x)$, and monotonically increasing for both $p>P(Y_x=y|x)$ and $p<P(Y_x=y|x)$. So $I(Y_x; X)$ obtains the local maximum at two boundaries $p=0, 1$. To compare those two points, denote $I(Y_x; X)$  as the mutual information if $p=0$, and $I'$ as the mutual information if $p=1$. Then we have $I - I' = P(x')\left(\log\frac{P(x')}{1+P(y'|x)} -\log\frac{P(x')}{1+P(y|x)}\right)\leq 0$,  since $P(y'|x)\leq P(y|x)$. The global maximum of mutual information is at $p =P(Y_x=y|x')=1$. 

($3\implies 1$) Assumes $I(Y_x;X)$ attain maximum given $P(X,Y)$. The above argument shows $P(Y_x=y|x') = 1$. So $P(Y_x=y) = P(x) + P(x,y)$ attain the Tian-Pearl upper bound.
\end{proof}

\section{Proof of \cref{lm:binaryx}} \label{pf:binaryx}

Recall \cref{lm:binaryx}
\binaryx*
\begin{proof}
    Given $P(Y,X)$, we have $P(Y_x = y_i|x_i)  = P(y_i|x)$ for all $i\leq n$. If $P(Y_x = y_p|x') = 1$, then $P(Y_x = y_p)$ attain the Tian-Pearl upper bound:
    $$P(Y_x = y_p) = P(Y_x = y_p|x)P(x) + P(Y_x = y_p|x')P(x') =  P(y_p,x) + P(x') = 1 - \sum_{i\neq p}P(y_i,x).$$
    
    Next, show the minimum mutual information that attains the Tian-Pearl lower bound. $P(Y_x=y_i) =  P(Y_x=y_i|x)P(x) + P(Y_x=y_i|x')P(x')$ attain the Tian-Pearl lower bound if $P(Y_x=y_i|x') = 0$ for all $i\neq p$.\\
    Since we fixed $P(Y_x|x) = P(Y|x)$, the domain of the mutual information is to a $(n-1)$simplex $\Delta^{n-1}$ of $P(Y_x|x')$. Since $I(Y_x; X)$ is convex with respect to $P(Y_x|X)$, this restricted function is also convex. Clearly, the restricted function obtains minimum when $P(Y_x|x') = P(Y_x|x)$. Since we fixed $P(y_p|x') = 0$, this corresponding to the restricted function on the $(n-2)$-simplex. With a similar argument, this restricted function is also convex. Now we only need to find the local extrema on the $(n-2)$-simplex.
    
    Let $P(Y_x=y_p|x') = 0$, and denote $P(y_i|x) = \alpha_i$ for all $i\leq n$ and $P(Y_x=y_i|x') = \beta_i$ for all $1\leq i \leq n$. So $P(Y_x) = [\alpha_0P(x), \alpha_1P(x)+\beta_1(P(x'), \dots , \alpha_nP(x)+ \beta_nP(x')].$\\
    Using the grouping property of entropy, we can write entropy as
    \begin{align*}
        H(Y_x) &=  H_b\left(\alpha_0P(x)\right) +H\left(\frac{\alpha_1P(x)+\beta_1P(x')}{1-\alpha_0P(x)}, \dots, \frac{\alpha_nP(x)+\beta_nP(x')}{1-\alpha_0P(x)}\right) \left(1-\alpha_0P(x)\right)\\
        &= H_b\left(\alpha_0P(x)\right) + H_b\left(\frac{\alpha_1P(x)+\beta_1P(x')}{1-\alpha_0P(x)}\right) \left(1-\alpha_0P(x)\right) \\
        & \qquad + H\left(\frac{\alpha_2P(x)+\beta_2P(x')}{1-\alpha_0P(x)}, \dots, \frac{\alpha_nP(x)+\beta_nP(x')}{1-\alpha_0P(x)}\right)\left(\frac{\sum_{i=2}^n(\alpha_iP(x)+\beta_iP(x'))}{1-\alpha_0P(x)}\right)
    \end{align*}

    Similarly, we can write the conditional entropy as
    \begin{align*}
        H(Y_x|X) &= P(x)H(Y_x|x) - P(x')H(Y_x|x') \\
        &= P(x)H(Y_x|x) - P(x')H(\beta_1, \dots, \beta_n) \\
        &= P(x)H(Y_x|x) - P(x')H_b(\beta_1) - P(x')H\left(\frac{\beta_2}{\sum_{i=2}^n\beta_i},\dots,\frac{\beta_n}{\sum_{i=2}^n\beta_i}\right)P\left(\sum_{i=2}^n\beta_i\right) 
    \end{align*}
    
    the mutual information as

    \begin{align*}
        I(Y_x; X) &= H(Y_x) - H(Y_x|X) \\
        &= H_b\left(\alpha_0P(x)\right) + H_b\left(\frac{\alpha_1P(x)+\beta_1P(x')}{1-\alpha_0P(x)}\right) \left(1-\alpha_0P(x)\right) \\
        & \qquad + H\left(\frac{\alpha_2P(x)+\beta_2P(x')}{1-\alpha_0P(x)}, \dots, \frac{\alpha_nP(x)+\beta_nP(x')}{1-\alpha_0P(x)}\right)\left(\frac{\sum_{i=2}^n(\alpha_iP(x)+\beta_iP(x'))}{1-\alpha_0P(x)}\right) \\
        & \qquad - P(x)H(Y_x|x) - P(x')H_b(\beta_1) - P(x')H\left(\frac{\beta_2}{\sum_{i=2}^n\beta_i},\dots,\frac{\beta_n}{\sum_{i=2}^n\beta_i}\right)P\left(\sum_{i=2}^n\beta_i\right) \\
    \end{align*}
    Now denote terms that do not involve $\beta_1$ as some constant. We can write the mutual information as follows.
    $$ I(Y_x; X) = C_1 +  \left(1-\alpha_0P(x)\right)H_b\left(\frac{\alpha_1P(x)+\beta_1P(x')}{1-\alpha_0P(x)}\right) + C_2 - C_3 - P(x')H_b(\beta_1) -C_4$$
    Then take the derivative with respect to $\beta_1$ and get
    \begin{align*}
        \frac{\partial I(Y_x; X)}{\partial \beta_1} &=  \left(1-\alpha_0P(x)\right)\left(\log\frac{1-\alpha_0P(x) - (\alpha_1P(x)+\beta_1P(x')) }{\alpha_1P(x)+\beta_1P(x')}\right)\frac{P(x')}{1-\alpha_0P(x)} - P(x') \log\frac{1-\beta_1}{\beta_1}\\
        &= P(x') \left(\log\frac{1-(\alpha_0+\alpha_1)P(x) +\beta_1P(x') }{\alpha_1P(x)+\beta_1P(x')} - \log\frac{1-\beta_1}{\beta_1}\right)
    \end{align*}

    Then we can find the local extrema by setting the derivative to zero. 
    \begin{align*}
         \frac{\partial I(Y_x; X)} {\partial \beta_1} &= 0\\
         P(x') \log\frac{1-(\alpha_0+\alpha_1)P(x) - \beta_1P(x')}{\alpha_1P(x)+\beta_1P(x')} &=  P(x')\log\frac{1-\beta_1}{\beta_1}\\
         \log\frac{1-(\alpha_0+\alpha_1)P(x) - \beta_1P(x') }{\alpha_1P(x)+\beta_1P(x')}  &= \log\frac{1-\beta_1}{\beta_1}\\
         \frac{1-(\alpha_0+\alpha_1)P(x) - \beta_1P(x') }{\alpha_1P(x)+\beta_1P(x')} &= \frac{1-\beta_1}{\beta_1}\\
          (\alpha_1P(x)+\beta_1P(x'))(1-\beta_1)  &= (1-(\alpha_0+\alpha_1)P(x) -\beta_1P(x'))\beta_1\\
         \alpha_1P(x)- \beta_1\alpha_1P(x) +\beta_1P(x') &= (1-(\alpha_0+\alpha_1)P(x)) \beta_1\\
          (1-(\alpha_0+\alpha_1)P(x) +\alpha_1P(x) - P(x')) \beta_1 &=\alpha_1P(x) \\
         (P(x)-(\alpha_0+\alpha_1)P(x) +\alpha_1P(x)) \beta_1&= \alpha_1P(x) \\
         (1 - \alpha_0-\alpha_1+ \alpha_1)P(x)  \beta_1&= \alpha_1P(x) \\
        \beta_1 &= \frac{\alpha_1}{1-\alpha_0}
    \end{align*}

    Repeat the steps for $1\leq i \leq n$, we can get the local minimum at $\beta_i = \frac{\alpha_i}{1-\alpha_0}$ for all $1\leq i \leq n$. Since the mutual information is convex, these points give the global minimum of mutual information.
\end{proof}

\section{Proof of \cref{lm:binaryy}}\label{pf:binaryy}
Recall the \cref{lm:binaryy}
\binaryy*

\begin{proof}
     Given $P(Y,X)$, we have $P(Y_x = y|x_q) = P(y|x_q)$ for all $y\in Y$. Assumes $P(Y_{x_q}=y)$ attain the Tian-Pearl upper bound, i.e. $$P(Y_{x_q}=y) = 1-  P(y',x_q) = P(y, x_q) + \sum_{j\neq q} (P(y, x_j)+P(y', x_j)) = P(y_p, x_q) + \sum_{j\neq q}P(x_j).$$
    On the other hand, we have 
    $$P(Y_{x_q} = y_p) = \sum_{j}P(Y_{x_q}=y_p| x_j)P(x_j) .$$
    Combines the above two equations, we get $P(Y_{x_q}=y_p| x_j) = 1$ for all $j\neq q$.

    For the lower bound, assumes $P(Y_{x_q}=y_p) = P(y_p,x_q)$ by a similar argument as above, we have 
    $$P(Y_{x_q} = y_p) = \sum_{j}P(Y_{x_q}=y_p| x_j)P(x_j) = P(y_p, x_q) + \sum_{j\neq q} P(Y_{x_q} = y_p | x_j)P(x_j)$$
    So from the above two equations, we get $P(Y_{x_q}=y_p| x_j) = 0$ for all $j\neq q$.
\end{proof}

\section{Proof of \cref{thm:entropyconstraint}}\label{pf:entropyconstraint}
Recall the \cref{thm:entropyconstraint}
\entropyconstraint*

\begin{proof}
     Let $P(U, V)$ be the constructed joint distribution according to the theorem. By \cref{lm:binaryxy}, assuming $P(y'|x)\leq P(y|x)$, $I(U;V)$ is maximum is equivalent to $P(v_0) = P(v_0|u_0)P(u_0) + P(v_0|u_1)P(u_1)$ attain maximum or minimum. That is when $P(v_0|u_1) = 1$ or $P(v_1|u_1) =1$
    
    If $P(v_0|u_1) = 1$, 
    \begin{equation}\label{eq:condition_ub}
        I(U;V) = H(V) - H(V|U) = H_b((1-P(y_p|x_q))P(x_q)) - P(x_q)H_b(P(y_p|x_q))
    \end{equation}
    If $P(v_1|u_1) = 1$, 
    \begin{equation}\label{eq:condition_lb}
        I(U;V) = H(V) - H(V|U) = H_b(P(y_p|x_q)P(x_q)) - P(x_q)H_b(P(y_p|x_q))
    \end{equation}
    
    First, consider the case where $Y$ is a binary variable and $\abs{X} = n$. By \cref{lm:binaryy}, $P(Y_{x_q}=y)$ attain the Tian-Pearl upper bound when $P(Y_{x_q}=y|x_j) = 1 $ for all $j\neq q$. So we have 
    $$  Y_x = \begin{cases} y &P(y|x_0)P(x_0) +\sum_{j=1}^n P(x_j) \\
    y' &P(y'|x_0)P(x_0)\\
    \end{cases}.$$
    Since $P(Y_{x_q}=y|x_j) = 1 $ for all $j\neq q$, $H(Y_{x_q}|x_j) = 0$ for all $j\neq q$. So $H(Y_{x_q}|X) = P(x_q)H_b(P(y|x_q))$. Then we have 
    $$I(Y_{x_q};X) = H(Y_{x_q}) - H(Y_{x_q}|X) = H_b(P(y'|x_q)P(x_q)) - P(x_q)H_b(P(y|x_q)).$$
    This equals to the \cref{eq:condition_ub}, so we have $P(Y_{x_q}=y)$ attain the Tian-Pearl upper bound implies $I(U; V)$ obtains maximum. 

    Again by \cref{lm:binaryy}, $P(Y_{x_q}=y)$ attain the Tian-Pearl lower bound when $P(Y_{x_q}=y'|x_j) = 1 $ for all $j\neq q$. So we have 
    $$  Y_x = \begin{cases} y &P(y|x_0)P(x_0) \\
    y' &P(y'|x_0)P(x_0) +\sum_{j=1}^n P(x_j).\\
    \end{cases}$$
    Since $P(Y_{x_q}=y'|x_j) = 1 $ for all $j\neq q$, $H(Y_{x_q}|x_j) = 0$ for all $j\neq q$. So $H(Y_{x_q}|X) = P(x_q)H_b(P(y|x_q))$. Then we have 
    $$I(Y_{x_q};X) = H(Y_{x_q}) - H(Y_{x_q}|X) = H_b(P(y|x_q)P(x_q)) - P(x_q)H_b(P(y|x_q)).$$
    This equals to the \cref{eq:condition_lb}, so we have $P(Y_{x_q}=y)$ attains the Tian-Pearl lower bound implies $I(U; V)$ obtains maximum. 

    We have shown for the binary $Y$,the causal effect $P(Y_x)$ attains Tian-Pearl bounds implies  $I(Y_x; X) = \max{(I(U; V))}$. Suppose we have $I(Y_x; X) \leq H(Z) < \max(I(U; V))$, by the contraposition, $P(Y_x)$ cannot attains Tian-Pearl bounds.

    Now consider the case where $X$ is a binary variable and $\abs{Y} = m$. By \cref{lm:binaryx}, the causal effect $P(Y_x=y_p)$ attains Tian-Pearl upper bound when $P(Y_x=y_p|x') = 1$; attains lower bound with minimum mutual information when $P(Y_x=y_i|x') = \frac{P(Y_x=y_i|X=x)}{\sum_{j\neq p}P(Y=y_j|X=x)}$ for all $i\neq p$.

    For the upper bound case, assuming $P(Y_x=y_p|x') = 1$, we have $P(Y_x=y_i|x') =0$ and $H(X|y_i) = 0$ for all $i\neq p$. $H(X|Y) = P(y_p)H(X|y_p)$.

    The mutual information is $$I(Y_x; X) = H_b(x) - P(y_p)H(X|y_p). $$ 

    On the other hand, we can write \cref{eq:condition_ub} as 

    $$ I(U;V) = H(U) - H(U|V) = H_b(x) - P(y_p)H(X|y_p) = I(Y_x; X).$$
    So we have $P(Y_x)$ attains the Tian-Pearl lower bound implies $I(Y_x;X) = \max(I(U; V))$

    Next assuming $P(Y_x=y_i|x') = \frac{P(Y_x=y_i|X=x)}{\sum_{j\neq p}P(Y=y_j|X=x)}$ for all $i\neq p$. We have $P(Y_x=y_p|x)=0$. Denote $P(Y_x=y_i|x) = \alpha_i$. Using the grouping property of entropy, we could get
    \begin{align*}
        H(Y_x|X) &= P(x)H(Y_x|x) + P(x')H(Y_x|x') \\
        &= P(x)H(\alpha_0,\dots,\alpha_n) + P(x')H\left(\frac{\alpha_0}{1-\alpha_p},\dots, \frac{\alpha_{p-1}}{1-\alpha_p}, \frac{\alpha_{p+1}}{1-\alpha_p},\dots, \frac{\alpha_m}{1-\alpha_p}\right) \\
        &= P(x) \left[ H(\alpha_p) + (1-\alpha_p)H\left(\frac{\alpha_0}{1-\alpha_p},\dots, \frac{\alpha_{p-1}}{1-\alpha_p}, \frac{\alpha_{p+1}}{1-\alpha_p},\dots, \frac{\alpha_m}{1-\alpha_p}\right) \right] \\
        &\qquad + P(x')H\left(\frac{\alpha_0}{1-\alpha_p},\dots, \frac{\alpha_{p-1}}{1-\alpha_p}, \frac{\alpha_{p+1}}{1-\alpha_p},\dots, \frac{\alpha_m}{1-\alpha_p}\right) \\
        &= P(x) H(\alpha_p) + \left(P(x)(1-\alpha_p)+P(x')\right) H\left(\frac{\alpha_0}{1-\alpha_p},\dots, \frac{\alpha_{p-1}}{1-\alpha_p}, \frac{\alpha_{p+1}}{1-\alpha_p},\dots, \frac{\alpha_m}{1-\alpha_p}\right)\\
        &= P(x) H(\alpha_p) + \left(1-\alpha_pP(x)\right) H\left(\frac{\alpha_0}{1-\alpha_p},\dots, \frac{\alpha_{p-1}}{1-\alpha_p}, \frac{\alpha_{p+1}}{1-\alpha_p},\dots, \frac{\alpha_m}{1-\alpha_p}\right).\\
    \end{align*}

    Then we have 
    $$  Y_x = \begin{cases} y_0 & \alpha_0P(x) + \frac{\alpha_0}{1-\alpha_p}P(x') \\
                            \vdots & \vdots\\
                            y_p & \alpha_pP(x)\\
                            \vdots & \vdots\\
                            y_m & \alpha_mP(x) + \frac{\alpha_m}{1-\alpha_p}P(x')
            \end{cases}$$
    Again by the grouping property, we have
    \begin{align*}
        H(Y_x) &= H_b(\alpha_pP(x)) + (1-\alpha_pP(x)) H\left(\frac{\alpha_0P(x) + \frac{\alpha_0}{1-\alpha_p}P(x')}{1-\alpha_pP(x)}, \dots\right) \\
        &=  H_b(\alpha_pP(x)) + (1-\alpha_pP(x)) H\left(\frac{ \frac{\alpha_0P(x)(1-\alpha_p) +\alpha_0P(x')}{1-\alpha_p}}{1-\alpha_pP(x)}, \dots\right) \\
        &=  H_b(\alpha_pP(x)) + (1-\alpha_pP(x)) H\left(\frac{ \frac{\alpha_0P(x)(1-\alpha_p) +\alpha_0(1-P(x))}{1-\alpha_p}}{1-\alpha_pP(x)}, \dots\right) \\
        &=  H_b(\alpha_pP(x)) + (1-\alpha_pP(x)) H\left(\frac{ \frac{\alpha_0P(x)-\alpha_0\alpha_pP(x) + \alpha_0-\alpha_0P(x)}{1-\alpha_p}}{1-\alpha_pP(x)}, \dots\right) \\
        &=  H_b(\alpha_pP(x)) + (1-\alpha_pP(x)) H\left(\frac{\alpha_0-\alpha_0\alpha_pP(x) }{(1-\alpha_p)(1-\alpha_pP(x))}, \dots\right) \\
        &=  H_b(\alpha_pP(x)) + (1-\alpha_pP(x)) H\left(\frac{\alpha_0(1-\alpha_pP(x)) }{(1-\alpha_p)(1-\alpha_pP(x))}, \dots\right) \\
        & = H_b(\alpha_pP(x)) +\left(1-\alpha_pP(x)\right) H\left(\frac{\alpha_0}{1-\alpha_p},\dots, \frac{\alpha_{p-1}}{1-\alpha_p}, \frac{\alpha_{p+1}}{1-\alpha_p},\dots, \frac{\alpha_m}{1-\alpha_p}\right)
    \end{align*}

    Finally, we have
     \begin{align*}
         I(Y_x;X) &= H(Y_x) - H(Y_x|X) \\
         &= H_b(\alpha_pP(x)) +\left(1-\alpha_pP(x)\right) H\left(\frac{\alpha_0}{1-\alpha_p},\dots, \frac{\alpha_{p-1}}{1-\alpha_p}, \frac{\alpha_{p+1}}{1-\alpha_p},\dots, \frac{\alpha_m}{1-\alpha_p}\right) \\
         &\qquad - P(x) H_b(\alpha_p) + \left(1-\alpha_pP(x)\right) H\left(\frac{\alpha_0}{1-\alpha_p},\dots, \frac{\alpha_{p-1}}{1-\alpha_p}, \frac{\alpha_{p+1}}{1-\alpha_p},\dots, \frac{\alpha_m}{1-\alpha_p}\right)\\
         &= H_b(\alpha_pP(x))- P(x) H_b(\alpha_p)\\
         &= H_b(P(y_p|x_q)P(x_q)) - P(x_q)H_b(P(y_p|x_q))
     \end{align*}
    This equals to \cref{eq:condition_lb}. So the minimum $I(Y_x;X)$ for $P(Y_x=y_p)$ attains Tian-Pearl lower bound is equal to the maximum of $I(U; V)$. For any other distribution where $P(Y_x)$ attains Tian-Pearl lower bound has mutual information greater than $\max{(I(U; V))} $. Hence $P(Y_x)$ attains Tian-Pearl lower bound implies the $I(Y_x;X) \geq \max{(I(U; V))}$.
    
    We have shown that for the binary $X$, the causal effect $P(Y_x)$ attains Tian-Pearl bounds implies  $I(Y_x; X) \geq \max{(I(U; V))}$. Suppose we have $I(Y_x; X) \leq H(Z) < \max(I(U; V))$, by the contraposition, $P(Y_x)$ cannot attains Tian-Pearl bounds.
\end{proof}

\section{Sampling the Joint Distribution}\label{sec:sampling}
Given a DAG as shown in \cref{subfig:dag}, we first generate $P(Z)\thicksim Dir(\alpha)$ for some small $\alpha$ value. In this experiment, we use $\alpha=0.1$. For $X$ with $n$ states, we first construct a vector $\bf{v} =\frac{1}{T}[1, \frac{1}{2},\dots, \frac{1}{n}]$, where $T$ is normalizing factor such that $\sum\bf{v}=1$. Then for each state of $Z$, we create a shifted $\bf{v_k}$ by rolling the values of $\bf{v}$. Then we sample $P(X|z_k)\thicksim Dir(\bf{v_k})$. Similarly, for $Y$ with $m$ states, we construct a vector $\bf{u} =\frac{1}{T}[1, \frac{1}{2},\dots, \frac{1}{m}]$ and for each $x_j, z_k$, we sample $P(Y|x_j, z_k)\thicksim Dir(\bf{u_i})$.
This procedure was described by Chickering and Meek \yrcite{chickering2012finding}. They use this method to prevent parent-child relationships between nodes from being uniform for a given DAG.

\clearpage

\section{Additional Experiment on ASIAN dataset}

\begin{figure}[htb]
    \centering
    \includegraphics[scale=0.18]{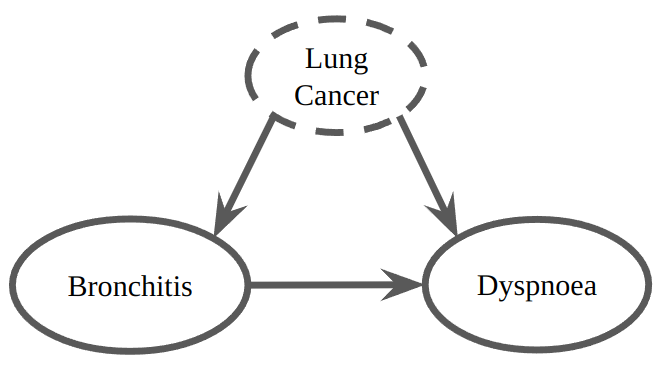}
    \caption{Causal graph for a subset of features from the ASIAN dataset. Unused variables are omitted.}
    \label{fig:asian_graph}
\end{figure}

We experiment with the ASIA dataset \cite{lauritzen1988local}. The causal graph is shown in \cref{fig:asian_graph}. We compute the bounds of the causal effect of Bronchitis on Dyspnoea with lung cancer as a confounder. The results are summarized in \cref{table:asian}. In this example, Bronchitis and Dyspnoea are connected through a backdoor path consisting of ``smoke'' and ``lung cancer''. In such a case, we can use the confounder with small entropy as a constraint to get tighter bounds, even if other variables in the backdoor path are more complex. The improvement of bounds shows the causal relationship between variables. The lower bound of $Dyspnoea|do(Bronchitis)$ increases from $0.364$ to $0.461$, and the upper bound of $Dyspnoea|do(Not Bronchitis)$ drops from $0.522$ to $0.412$. Since the bounds lie below $0.5$, one can be more certain that Bronchitis has some effect on Dyspnoea.

\begin{table*}[htb]
\footnotesize
\caption{Results of Causal Effect in ASIAN dataset}
\label{table:asian}
\vskip 0.15in
\begin{center}
\begin{sc}
\begin{tabular}{l |c |c c |c |c | r}
\toprule
Dataset  &X &  Y & H(Z) & Our bounds & T-P bounds \\
\midrule
Asia & Bronc & Dysp & Cancer \\ \cline{3-4}
    & Yes & Yes & 0.31 & [\blue{0.461}, 0.914] & [0.364, 0.914] \\
    & Yes & No & 0.31 & [0.072, \blue{0.412}] & [0.072, 0.522] \\
    & No & Yes & 0.31 & [0.086, \blue{0.539}] & [0.086, 0.636] \\
    & No & No & 0.31 & [\blue{0.588}, 0.928] & [0.478, 0.928] \\

\bottomrule
\end{tabular}
\end{sc}
\end{center}
\vskip -0.1in
\end{table*}

\end{document}